\pdfoutput=1

\documentclass[11pt]{article}

\usepackage[final]{acl}



\usepackage[markup=underlined]{changes}

\usepackage{times}
\usepackage{latexsym}
\usepackage[utf8]{inputenc}   
\usepackage[T1]{fontenc}    	
\usepackage{url}              		 
\usepackage{booktabs}       	
\usepackage{amsfonts}       	
\usepackage{nicefrac}       	  
\usepackage[final,nopatch=footnote]{microtype}      	

\usepackage{inconsolata}
\usepackage{graphicx}
\usepackage{color}
\usepackage{rotating}
\usepackage{tabularx}
\usepackage{pdflscape}
\usepackage{amsmath,amsthm,amssymb}
\usepackage{algorithm,algorithmic}
\usepackage{bbm,dsfont}
\usepackage{subfig}
\usepackage{caption}
\usepackage{wrapfig}
\usepackage{cancel}
\usepackage{enumerate,cases}
\usepackage{thmtools,thm-restate}
\usepackage{mathtools}

\usepackage{multirow}
\usepackage{makecell}
\usepackage{setspace}
\usepackage{pifont}
\usepackage{array}
\usepackage{enumitem}
\usepackage{lineno}     

\setlength{\emergencystretch}{3em}
\allowdisplaybreaks

\usepackage{silence}
\WarningFilter{fontenc}{Font shape `T1/ptm/m/scit' undefined}


\usepackage{hyperref}		 
\hypersetup{
}
\usepackage[capitalize]{cleveref}




\crefformat{figure}{Fig.~#2#1#3}       
\crefformat{table}{Tab.~#2#1#3}        
\crefformat{equation}{Eq.~(#2#1#3)}    
\crefformat{section}{Sec.~#2#1#3}      
\crefformat{appendix}{App.~#2#1#3}     
\crefformat{algorithm}{Alg.~#2#1#3}    


\newcommand{\squishlisttwo}{
 \begin{list}{$\bullet$}
  { \setlength{\itemsep}{1pt}
     \setlength{\parsep}{0pt}
    \setlength{\topsep}{0pt}
    \setlength{\partopsep}{0pt}
    \setlength{\leftmargin}{1em}
    \setlength{\labelwidth}{1.5em}
    \setlength{\labelsep}{0.5em} } 
}
\newcommand{\squishend}{
  \end{list}  }

\newcommand{\alg}{\textsc{Tetris}}
\newcommand{\vllm}{\texttt{vLLM}}
\newcommand{\sglang}{\texttt{SGLang}}

\usepackage{amsmath,amsfonts,bm}









\def\eqref#1{equation~\ref{#1}}









\def\1{\bm{1}}








\def\vone{{\bm{1}}}



\DeclareMathAlphabet{\mathsfit}{\encodingdefault}{\sfdefault}{m}{sl}
\SetMathAlphabet{\mathsfit}{bold}{\encodingdefault}{\sfdefault}{bx}{n}


\def\gB{{\mathcal{B}}}

\def\gD{{\mathcal{D}}}

\def\gG{{\mathcal{G}}}
\def\gH{{\mathcal{H}}}

\def\gM{{\mathcal{M}}}

\def\gO{{\mathcal{O}}}

\def\gS{{\mathcal{S}}}












\theoremstyle{plain}

\newtheorem{lem}{Lemma}

\newtheorem{assu}{Assumption}

%
%
\title{\includegraphics[width=0.8\baselineskip]{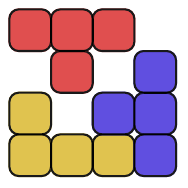} \alg{}: Optimal Draft Token Selection for Batch Speculative Decoding}



\author{
 \textbf{Zhaoxuan Wu\textsuperscript{*1}},
 \textbf{Zijian Zhou\textsuperscript{*1,2}},
 \textbf{Arun Verma\textsuperscript{1}},
 \textbf{Alok Prakash\textsuperscript{1}},
\\
 \textbf{Daniela Rus\textsuperscript{1,3}},
 \textbf{Bryan Kian Hsiang Low\textsuperscript{1,2}}
\\
\\
 \textsuperscript{1}Singapore-MIT Alliance for Research and Technology, Republic of Singapore
 \\
 \textsuperscript{2}Dept. of Computer Science, National University of Singapore, Republic of Singapore \\
 \textsuperscript{3}CSAIL, Massachusetts Institute of Technology, USA
\\
 \small{
   \textbf{Correspondence:} \href{mailto:lowkh@comp.nus.edu.sg}{lowkh@comp.nus.edu.sg}
 }
}

\begin{document}
    \makeatletter
    \def\blankfootnote{\xdef\@thefnmark{}\@footnotetext}
    \makeatother
    
    \maketitle
    
    \begin{abstract}
        We propose \alg{}, a novel method that optimizes the {\em total throughput} of batch speculative decoding in multi-request settings. 
        Unlike existing methods that optimize for a single request or a group of requests as a whole, \alg{} actively selects the most promising draft tokens (for every request in a batch) to be accepted when verified in parallel, resulting in fewer rejected tokens and hence less wasted computing resources. Such an effective resource utilization to achieve fast inference in large language models (LLMs) is especially important to service providers with limited inference capacity.
        Compared to baseline speculative decoding, \alg{} yields a consistently higher acceptance rate and more effective utilization of the limited inference capacity. 
        We show theoretically and empirically that \alg{} outperforms baseline speculative decoding and existing methods that dynamically select draft tokens, leading to a more efficient batch inference in LLMs.
        \blankfootnote{\textbf{*} Equal contribution.}
    \end{abstract}

    \section{Introduction}
    \label{sec:intro}

Transformer-based large language models (LLMs) have shown remarkable abilities to solve different tasks across various domains, such as natural language~\citep{zhao2023survey}, computer vision~\citep{Yin2024mllm}, robotics~\citep{zeng2023largelanguagemodelsrobotics}, code generation~\citep{roziere2024codellamaopenfoundation}, among others~\citep{maslej2024ai-report}. 
However, the autoregressive nature of LLMs (i.e., generating one token at a time) leads to an increasingly sluggish inference speed as the model size increases.

To address this problem, a recent widely-used approach is speculative decoding (SD)~\citep{cai2024medusa,cheng2024recurrentdrafterfastspeculative,leviathan2023,li2024eagle2,li2024eagle}: It achieves faster inference by using a \textit{small draft model} to rapidly generate a sequence of {\em (draft) tokens} and then a {\em large target model} to verify whether to accept or reject them in parallel. 
When a token is rejected, the draft model generates a new sequence of tokens in the next step, starting from the most recently accepted token.
A key aspect of SD is to determine the optimal number of draft tokens (i.e., {\em draft window size}) to generate and verify in each step.
Generating more draft tokens allows the target model to verify a longer sequence at once (given sufficient computing resources/capacity for parallel inferences), which can potentially boost inference speed. 
However, doing so increases the risk of wasting computing resources 
since all tokens following the first rejected token must be discarded.
In contrast, generating fewer draft tokens reduces this risk but limits the potential benefit of SD since the computing resources are not effectively utilized.
Therefore, the optimal selection of draft tokens that would be accepted when verified by the target model in parallel is critical to improving both inference speed and resource utilization~\citep{liu2024optimizingspeculativedecodingserving}.

\begin{figure*}[t]
    \centering
    \includegraphics[width=0.65\linewidth]{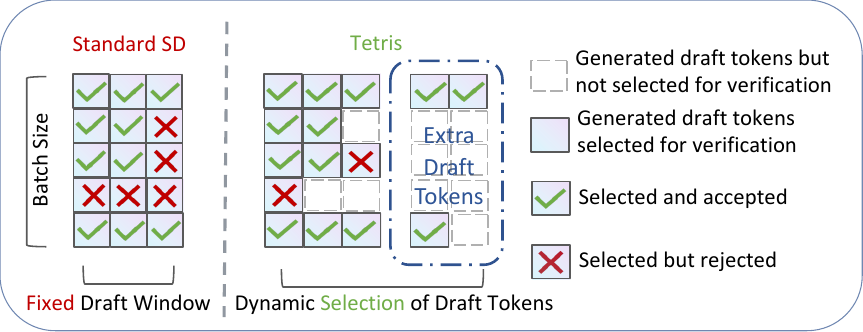}
    \caption{
        \textcolor{red!60!black}{Standard SD} (left) uses a fixed draft window size, while \textcolor{green!50!black}{\alg{}} (right) generates extra draft tokens and dynamically optimizes draft token selection for every request in a batch, resulting in more accepted tokens. 
    }
    \label{fig:tetris_illustration}
\end{figure*}

Most existing works have focused on optimizing draft token selection for individual user requests~\citep{agrawal2024adaedlearlydraftstopping,huang2024specdecboostingspeculativedecoding,liu2024parallelspeculativedecodingadaptive,mamou2024dynamicspeculation}, but 
may not work well for profit-driven LLM inference service providers who must manage multiple user requests under a limited inference capacity.
Moreover, LLM inference service providers typically charge users based on the number of tokens served~\citep{fireworksai,replicate}. 
Hence, they are incentivized to maximize the total number of tokens served (i.e., {\em throughput}) across all user requests while ensuring fast response time to meet service level agreement~\citep{wieder2011service}.
So, they would employ computing clusters to process large batches of user requests simultaneously and use SD to further improve the inference speed.

Such batch processing of user requests entails a fundamentally different optimization objective for SD compared to handling individual requests.
For SD of a single request, supposing a fast draft model with negligible runtime, the objective is to maximize the draft window size as long as the target model can verify all draft tokens in parallel by fully utilizing the inference capacity.
It can be naively extended to batch processing by 
widening the draft window for all requests until the inference capacity is reached.
This is inefficient as each request may require a different optimal draft token selection due to varying 
difficulty in speculation (i.e., generating tokens to match the target model's outputs).

This paper presents a theoretical framework that dynamically optimizes the draft token selection for every user request from the perspective of a capacity-limited LLM inference service provider who aims to maximize resource utilization. 
Since draft token verification is the most time-consuming component of SD, we propose \textbf{\alg}, a method that greedily selects draft tokens with a high likelihood of acceptance by the target model. The name of our method is derived from the shape of its selected tokens, as shown in~\cref{fig:tetris_illustration}.
We demonstrate that 
\alg{} strictly outperforms standard SD by achieving higher total throughput.
Our work bridges a critical yet overlooked gap in current research, allowing service providers to improve total throughput with batch SD. 
The specific contributions of our work here are summarized below:
\squishlisttwo
    \item In \cref{sec:problem}, we introduce the problem of optimal draft token selection in multi-request settings, and in \cref{sec:tetris}, we propose \alg{}, a novel method that selects optimal draft tokens in log-linear time for the target model's verification. 

    \item In \cref{sec:analysis}, we theoretically show that \alg{} achieves optimal throughput at each decoding step and globally in the absence of drafting time (i.e., time to generate draft tokens) under reasonable token acceptance assumptions.

    \item In \cref{sec:experiment}, our empirical results show that \alg{} consistently outperforms standard SD and existing methods that use dynamic draft windows for a batch in terms of total throughput and end-to-end latency (including drafting time),
    highlighting the potential of \alg{} to improve inference speed in real-world model service deployments.
\squishend

    \section{Related Work}
    \label{sec:related_work}

\paragraph{Speculative Decoding (SD).}
By employing a draft-then-verify strategy for lossless accelerations of LLM inference, SD has attracted significant attention recently~\citep{ryu2024closerlook-survey,xia2024sd-survey}.
Recent advancements based on SD have focused on developing more efficient draft models by producing multiple drafts for the next few tokens~\citep{cai2024medusa,cheng2024recurrentdrafterfastspeculative,li2024eagle}.
Additionally, some methods have optimized the speculation accuracy by aligning the draft model with the target model~\citep{liu2024onlinespeculativedecoding,zhou2024distillspec} or leveraging the target model itself to draft via techniques like layer skipping~\citep{zhang2024skiplayer}.
To facilitate more efficient verification, tree attention has been proposed 
for speedy tree-structured candidate verification~\citep{miao2024specinfer,spector2023stagedSD}.
In contrast, our work explores a complementary approach that intervenes between the draft and target models, performing strategic draft token selection to improve throughput over batched requests.
Our method can be seamlessly integrated with the above techniques for a more efficient SD system.

\paragraph{LLM Scheduling.}
With the growing popularity of LLM as a service, several works have considered improvements to the scheduling of LLM services. 
These works can be broadly categorized into client-side and server-side approaches.
Server-side approaches~\citep{fu2024efficient,kim2024accelerating,liu2024optimizingspeculativedecodingserving,wang2024opttreespeculativedecodingadaptive} have focused on increasing the throughput of LLM services, which may lead to an unfair allocation of inference resources to users, hence causing starvation. 
On the other hand, client-side approaches~\citep{liu2024andes,sheng2024fairness} have focused on improving user satisfaction by improving client-side metrics (e.g., decreasing maximal waiting time or end-to-end latency).
Our work considers the scenario where the LLM inference service provider employs SD to ensure user satisfaction with inference speed while simultaneously aiming to maximize service throughput to optimize profitability.

\paragraph{Draft Window Optimization.}
In the foundational paper on SD, the authors have proposed to generate a window of draft tokens~\citep{leviathan2023}. 
The optimal draft window is theoretically determined under an impractical assumption of identical conditional acceptance rates for all draft tokens~\citep{leviathan2023}.
Empirically, such an acceptance rate can be estimated by a moving average of past requests~\citep{liu2024optimizingspeculativedecodingserving}.
Other heuristics for finding the optimal draft window include stopping the draft generation when the draft model's confidence score falls below a predetermined threshold~\citep{kim2023speculativedecodingbiglittle,liu2024kangaroo} or when an entropy-controlled criterion is met~\citep{agrawal2024adaedlearlydraftstopping}.
\citet{cai2024medusa} have proposed taking the union of these two heuristics.
These existing works have operated at a single-request level, except that of~\citet{liu2024optimizingspeculativedecodingserving}  which adaptively determines a single draft window for all requests in a batch.
Note that considering each request independently or using a common draft window for a batch
can lead to inefficiencies in allocating verification budgets (i.e., inference capacity) across multiple requests, especially when operating under the limited computing resources of an LLM inference service provider.

    \section{Problem Setup}
    \label{sec:problem}
This section first introduces speculative decoding and then describes the optimal draft token selection problem and the performance metrics used.

\subsection{Speculative Decoding (SD)}
SD is an efficient inference method designed to accelerate the decoding process in LLMs and 
involves two phases: drafting followed by verification.
Initially, a lightweight draft model, denoted as $\gS$, quickly generates candidate draft tokens.
Subsequently, these tokens are verified against the generations from the target model, denoted as $\gM$, which is also often referred to as the verification model.
SD allows parallelized verifications of tokens by $\gM$, as opposed to the conventional autoregressive decoding used in language models.
Hence, SD yields significant improvement in decoding speed.

Specifically, the draft model generates $k$ draft tokens $d_1, \ldots, d_k$ in an autoregressive manner  where $k$ is the draft window size.
Given a prompt or prefix $x$, the generation process follows $d_i \sim p_\gS(\cdot|x, d_1, \ldots, d_{i-1})$.
For notational simplicity, we denote $p_\gS(d_i) = p_\gS(d_i|x, d_1, \ldots, d_{i-1})$.
The verification follows a rejection sampling procedure.
If $p_\gS(d_i) \leq p_\gM(d_i)$, the draft token $d_i$ is accepted.
Otherwise, we reject the draft token with a probability of $1 - p_\gM(d_i)/p_\gS(d_i)$ and then output a new token sampled from an adjusted distribution $p_\gM(d_i') = \mathrm{norm}(\max(0, p_\gM(d_i') - p_\gS(d_i')))$, where $\mathrm{norm}(\cdot)$ normalizes the probability distribution.
Hence, the acceptance of draft tokens depends on both $p_\gS(\cdot)$ and $p_\gM(\cdot)$ and plays a vital role in the effectiveness of SD.
A higher acceptance suggests the possibility of greater speedup gain with a larger $k$.
We defer a more detailed discussion of the acceptance rate estimation in~\cref{app:acceptance-related-work}. 
However, we highlight that the effectiveness of SD is limited by the computing resources available. 
Using a draft window exceeding the capacity for parallel inferences that the server can manage degrades the performance, which we show empirically later in~\cref{sec:experiment}. 
Consequently, it is essential to carefully \textit{select} the draft window size for each request,
leading to our proposed method outlined next.

\subsection{Optimal Draft Token Selection}
\label{sec:optimal_draft_selection}

We first define a set of other notations used throughout our paper.
We consider a specific LLM inference service provider with a limited capacity $C$, which represents the maximum number of parallel inferences its computing resources can perform. 
The capacity depends on the server configurations of the service provider in practice.
At each time step, the server processes a batch of $N$ requests $r_1, r_2, \cdots, r_N$, each with a partially complete sequence $S_{i,t_i} = (d_{i,1}, \ldots, d_{i, t_i})$ where $t_i$ represents the number of tokens verified/served so far for request $r_i$. 
We allow a variable draft window size $k_i$ for each request $r_i$. 
The draft model $\gS$ drafts a set $\gD \coloneqq \{(i, t) | i \in [N], t \in [t_i+k_i]\}$ such that $|\gD| = \sum_{i=1}^{N} k_i = C$.
For each $(i,t) \in \gD$, we send $S_{i,t}$ to have its \textit{last token} verified by $\gM$. 
We aim to optimally choose the set $\gD$ at each time step to maximize the performance of the server in terms of generation throughput, which we define below.

\paragraph{Per-step Throughput.} 
For each step of SD, we are mainly concerned with maximizing the per-step throughput, i.e., the number of tokens served at each time step. 
Mathematically, let $\vone_{i,t}$ be an indicator variable representing whether the last token of $S_{i,t}$ is accepted, let $\tau_{\text{step}}$ be the time per step. The per-step throughput is then defined as
\begin{equation*}
   \textstyle \gG_{\text{step}} \coloneqq (\mathbb{E}[\sum_{(i,t) \in \gD}\vone_{i,t}] + N ) / \tau_{\text{step}} \ .
\end{equation*}
Note that at least one token is always generated by SD via the \textit{bonus token} mechanism~\citep{leviathan2023}. Thus, without considering drafting time, the throughput of SD is theoretically at least as good as that of autoregressive decoding. 

\paragraph{Total Throughput.}
The total throughput is calculated as the average per-step throughput over a total of $T$ steps with a fixed $\tau_{\text{step}}$ for each step:
\begin{equation*}
    \textstyle \gG \coloneqq T^{-1}\sum_{i=1}^{T} \gG_{\text{step}}\ .
\end{equation*}
Note that it is theoretically difficult to find an optimal draft token selection strategy that maximizes $\gG$ as the relationship between previously verified tokens and the distribution of acceptance rate for the remaining tokens is extremely complex. However, under a mild assumption on token acceptance rate, the optimality of $\gG$ is equivalent to the optimality of $\gG_{\text{step}}$, as explained formally in~\cref{sec:analysis} later.

    \section{\alg{}: Optimal Draft Token Selection}
    \label{sec:draft_window}

In this section, we introduce the details of the \alg{} for batch SD and provide an analysis of its time complexity and optimality.
Overall, we leverage the insight that SD suffers from a cascading failure rate in a single sequence but not across different sequences.
More specifically, we distinguish between two types of tokens involved in drafting: \textit{sequential} and \textit{parallel}. 
For each request $r_i$, all pairs $(i, \cdot) \in \gD$ are sequential, i.e., for all $j < k$, $(i,j)$ must be accepted for $(i, k)$ to be accepted as well, implying a cascade of the failure rate. 
On the other hand, for $i \neq j$, $(i, \cdot)$ and $(j, \cdot)$ are parallel, as the failure rate of $(i, \cdot)$ does not influence that of $(j, \cdot)$. 
We highlight that the distinct nature of the two modes serves as the fundamental motivation of our proposed approach for an improved $\gG_{\text{step}}$, and consequently the total throughput $\gG$.

    \subsection{Our Approach and Design}
    \label{sec:tetris}

We introduce inter-dependencies among requests within a batch.
We favor parallel tokens when selecting sequential tokens leads to an excessive cascading of failure rates, and \textit{vice versa}.
To achieve this, we propose to introduce a manager to actively select the best draft tokens that are most likely to be successfully verified by the target model, thus maximizing the expected number of output tokens.
The manager is integrated into the speculative decoding framework and functions as an intermediary between the draft model and the target model.
It operates on the draft tokens and auxiliary outputs (e.g., token distributions, hidden states) from the draft model and strategically selects those that will be sent for verification by the target model.

At each step, define $p_{i,j}$ the conditional acceptance rate of the token at index $(i,j)$ given its corresponding prefix.
Let $\gB_{i,j} \coloneqq (i, j, \prod_{t=1}^{j} p_{i,t})$ be the tuple containing token indices and the probability of all selected tokens in row $i$ up to $j$ being accepted. 
Instead of simply selecting a fixed window of draft tokens for verification, we \textit{greedily} look for tokens with the highest cumulative acceptance rate $\prod_{t=1}^{j} p_{i,t}$ (and not the standalone acceptance rate $p_{i,j}$).
We let the draft model  propose the \textit{extra draft tokens} beyond the server capacity and then select a set $\gD^*$ of tokens such that it maximally utilizes the compute resource by ensuring $|\gD^*| = C$. 
This process dynamically allocates longer draft windows for requests with ``easy'' tokens and shorter windows for ``hard'' ones, reducing resource wastage while sufficiently leveraging speculation, as illustrated in~\cref{fig:tetris_illustration}. 
\alg{} is outlined in \cref{alg:tetris}.

\begin{algorithm}[h]
    \caption{\alg{}}
    \label{alg:tetris}
    \begin{algorithmic}[1]
        \STATE {\bfseries Input:} draft $\gB$, batch size $N$, capacity $C$
        \STATE Initialize $\gD^* \gets \{\}$, $\gH \gets \text{Heap}()$
        \STATE $Z \gets \text{InitArray}(size=N, value=-1)$
        \FOR{$i \in [N]$}
        \STATE $\gH.\text{insert}(\gB_{i, 0})$
        \ENDFOR
        \REPEAT
        \STATE \texttt{// Dequeue the most probable}
        \STATE $(i,j,p_{ij}) = \gH.\text{extractMax}()$
        \STATE $\gD^* = \gD^* \cup \{(i,j)\}$
        \STATE \texttt{// Record the row-wise frontier}
        \STATE $Z[i]=j$
        \STATE \texttt{// Enqueue new candidates}
        \STATE $\gH.\text{insert}(\gB_{i,j + 1})$
        \UNTIL{$|\gD^*| = C$}
        \STATE {\bfseries return} $\gD^*$
    \end{algorithmic}
\end{algorithm}

\subsection{Analysis}\label{sec:analysis}
We now present our theoretical results, which show the per-step and global optimality of \alg{}.
\begin{restatable}[Per-step Optimality of \alg{}]{thm}{tetris}
    \label{thm:tetris}
    In the absence of drafting time, given the true acceptance rate $\alpha_{i,j}$ of each draft token $(i, j)$, \cref{alg:tetris} produces the optimal per-step throughput defined in \cref{sec:problem}.
\end{restatable}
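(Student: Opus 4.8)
The plan is to show that Algorithm~\ref{alg:tetris} selects a set $\gD^*$ of size $C$ that maximizes $\expect{\sum_{(i,t)\in\gD}\vone_{i,t}}$; since $N$ and $\tau_{\text{step}}$ are fixed, this is exactly maximizing $\gG_{\text{step}}$. First I would rewrite the objective. For a selection that assigns window size $k_i$ to request $r_i$ (so $\gD$ consists of the prefixes $(i,1),\ldots,(i,k_i)$ for each $i$, with $\sum_i k_i = C$), the chain structure of sequential tokens gives $\expect{\vone_{i,j}} = \prod_{t=1}^{j}\alpha_{i,t}$, because token $(i,j)$ is accepted only if all of $(i,1),\ldots,(i,j)$ are, and the conditional acceptance rate of $(i,t)$ given its prefix is $\alpha_{i,t}$. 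Hence $\expect{\sum_{(i,t)\in\gD}\vone_{i,t}} = \sum_{i=1}^{N}\sum_{j=1}^{k_i}\prod_{t=1}^{j}\alpha_{i,t}$. So the problem reduces to the purely combinatorial one: choose nonnegative integers $k_1,\ldots,k_N$ summing to $C$ to maximize $\sum_i f_i(k_i)$, where $f_i(k) \coloneqq \sum_{j=1}^{k}\prod_{t=1}^{j}\alpha_{i,t}$.

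The key structural observation is that the marginal gain of extending request $i$ from window $j-1$ to window $j$ is exactly $\Delta_{i,j} \coloneqq f_i(j)-f_i(j-1) = \prod_{t=1}^{j}\alpha_{i,t}$, which is precisely the priority value $\gB_{i,j}$ that the heap in Algorithm~\ref{alg:tetris} uses. Moreover, since $\alpha_{i,t}\in[0,1]$, these marginal gains are nonincreasing in $j$ along each row: $\Delta_{i,1}\ge\Delta_{i,2}\ge\cdots$. This means each $f_i$ is concave (as a function on the integers), so $\sum_i f_i(k_i)$ is a separable concave maximization over the simplex-like constraint $\sum k_i = C$. For such problems the greedy algorithm — repeatedly take the largest available marginal increment — is optimal; and that greedy is exactly what Algorithm~\ref{alg:tetris} does, since after taking $(i,j)$ it pushes $\gB_{i,j+1}$ (the next increment in row $i$) back onto the heap, and extractMax always pops the globally largest remaining increment. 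I would make this rigorous with a standard exchange argument: given any optimal selection $\{k_i^\star\}$ and the greedy selection $\{k_i\}$, if they differ there is a request $a$ with $k_a > k_a^\star$ and a request $b$ with $k_b < k_b^\star$; by the greedy's pop order, the increment $\Delta_{a,k_a}$ was taken while $\Delta_{b,k_b^\star}$ was still available, so $\Delta_{a,k_a}\ge \Delta_{b,k_b^\star+1}$... wait, I need to be careful with indices here. Concavity handles it: $\Delta_{a,k_a^\star+1}\ge\Delta_{a,k_a}\ge\Delta_{b,k_b^\star}$, so moving one unit from $b$ to $a$ in the optimal solution does not decrease the objective, contradicting optimality unless the solutions already coincide (or the objective is tied, in which case greedy is also optimal). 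Iterating turns $\{k_i^\star\}$ into $\{k_i\}$ without loss, giving optimality of greedy.

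The main obstacle I anticipate is being precise about the sample space and the meaning of ``optimal per-step throughput.'' Concretely: (i) justifying that $\expect{\vone_{i,j}} = \prod_{t=1}^{j}\alpha_{i,t}$ requires invoking that $\alpha_{i,t}$ is the \emph{conditional} acceptance rate given the realized prefix and that acceptance is monotone along the chain (a rejected token in a row forces all later tokens in that row to be discarded), which the paper asserts in Section~\ref{sec:draft_window}; and (ii) arguing that the optimum is attained by a ``prefix-shaped'' selection at all, i.e., that it is never beneficial to include $(i,j)$ without $(i,j')$ for $j'<j$ — this is immediate since an isolated suffix token contributes $0$ to the expectation while wasting a slot, so any optimal solution can be assumed prefix-shaped. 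Once these two points are nailed down, the remainder is the concavity-plus-greedy argument above, which is routine. I would also note in passing that ties (equal increments) cause no issue: the greedy output is still \emph{an} optimal selection even if not unique, which is all the theorem claims.
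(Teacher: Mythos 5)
Your proposal is correct and rests on the same core idea as the paper's proof: both are greedy-exchange arguments showing that \cref{alg:tetris} ends up selecting the $C$ available increments of largest value. The paper argues by a two-case contradiction on the overlap $\tilde{\gD}=\gD'\cap\gD^*$ between \alg{}'s selection and a putative optimum, asserting directly that every token \alg{} adds outside the overlap dominates every token the competitor adds outside it. Your version makes explicit why that domination holds: you decompose $\expect{\sum_{(i,t)\in\gD}\vone_{i,t}}$ into per-row marginal gains $\Delta_{i,j}=\prod_{t=1}^{j}\alpha_{i,t}$, observe these are nonincreasing in $j$ (so each row's contribution is a concave function of its window size), reduce the problem to separable concave allocation under a cardinality budget, and apply the standard exchange argument. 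This buys three things the paper's terser proof leaves implicit: the justification that $\expect{\vone_{i,j}}$ equals the cumulative product of conditional acceptance rates, the observation that any optimum may be assumed prefix-shaped (an isolated suffix token contributes zero while consuming a slot), and the within-row monotonicity that makes the frontier-only heap equivalent to taking the global top-$C$ increments. The index wobble you flag in the exchange step resolves exactly as you indicate: $\Delta_{a,k_a}\ge\Delta_{b,k_b+1}$ because the greedy preferred the former while the latter was still on the heap, and $\Delta_{b,k_b+1}\ge\Delta_{b,k_b^\star}$ by monotonicity, so transferring one unit from $b$ to $a$ in the putative optimum cannot decrease the objective.
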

The proof is delayed to \cref{app:proof_of_local_tetris}. 
While we have established the local optimality of \alg{} in \cref{thm:tetris}, such local optimality does not trivially generalize to maximizing total throughput. 
Nevertheless, we show, in \cref{thm:globalOpt}, that \alg{} is optimal in a 
slightly simpler scenario that retains sufficient complexity of interest.

\begin{assu}
\label{assu:equal_acc}
$\forall j$, all tokens in the $j$-th sequence have an identical acceptance rate denoted as $\alpha_j$.
\end{assu}

\begin{restatable}[Global Optimality of \alg{} under Assumption]{thm}{globalOpt}
    \label{thm:globalOpt}
    Under \cref{assu:equal_acc}, in the absence of drafting time, \alg{} searches for the optimal $\gG$ under the same capacity. Morever, if $\alpha_1 = \alpha_2 = \cdots = \alpha_N$, \alg{} has the same $\gG$ as standard batched speculative decoding.
\end{restatable}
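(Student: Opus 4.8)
The plan is to reduce the global throughput problem, under Assumption~\ref{assu:equal_acc}, to a static allocation problem over the draft windows $k_1,\dots,k_N$ that is decoupled across time steps, and then invoke the per-step optimality already established in Theorem~\ref{thm:tetris}. Concretely, under Assumption~\ref{assu:equal_acc} the conditional acceptance rate of a token depends only on which sequence-position (the ``$j$-th sequence'') it belongs to, and \emph{not} on the prefix, the request index $i$, or the decoding step $t$. This is the key simplification: the expected number of tokens accepted from a draft window of length $k_i$ in row $i$ is a fixed function of $k_i$ alone, namely $\sum_{j=1}^{k_i}\prod_{\ell=1}^{j}\alpha_\ell$ (plus the bonus token), independent of $t$. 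Hence the per-step throughput $\gG_{\text{step}}$ is the same function of the window-profile $(k_1,\dots,k_N)$ at every step, and maximizing the average $\gG = T^{-1}\sum_t \gG_{\text{step}}$ is equivalent to maximizing $\gG_{\text{step}}$ for a single step subject to $\sum_i k_i = C$.

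First I would make precise that \alg{}, under Assumption~\ref{assu:equal_acc}, behaves identically at every step: the heap is seeded with the same $\gB_{i,0}$ values, and each insertion $\gB_{i,j+1}$ carries cumulative probability $\prod_{\ell=1}^{j+1}\alpha_\ell$, which is step-independent; therefore the multiset of $(i,j)$ pairs extracted — equivalently, the induced window profile $(k_1,\dots,k_N)$ — is the same across all $T$ steps. Second, I would invoke Theorem~\ref{thm:tetris}: since the true acceptance rates are exactly the $\alpha_j$ under the assumption, \alg{} produces the per-step-optimal $\gD^*$, i.e. it maximizes $\gG_{\text{step}}$ over all profiles with $\sum_i k_i = C$. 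Combining the two observations: any competing global strategy achieves, at each step, per-step throughput at most that of \alg{}'s fixed profile (by Theorem~\ref{thm:tetris} applied step-by-step, noting the per-step objective does not depend on history under the assumption), so its time-averaged throughput $\gG$ is also at most \alg{}'s. This gives global optimality.

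For the ``moreover'' clause, suppose $\alpha_1 = \alpha_2 = \cdots = \alpha_N =: \alpha$. Then the expected accepted-token count as a function of a window profile is symmetric and, crucially, the marginal gain of extending row $i$ from $k_i$ to $k_i+1$ equals $\prod_{\ell=1}^{k_i+1}\alpha_\ell$, which depends only on the current length $k_i$ and not on $i$. I would argue that the greedy heap extraction in \cref{alg:tetris} then pulls tokens in a way that keeps the window lengths as balanced as possible — at every extraction the largest cumulative probability on the heap corresponds to the shortest current row (ties broken arbitrarily), so the resulting profile has all $k_i$ equal to $\lfloor C/N\rfloor$ or $\lceil C/N\rceil$. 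This is exactly the uniform draft-window profile that standard batched speculative decoding uses when it spreads capacity $C$ evenly across $N$ requests, so the two methods realize the same $\gD^*$ at every step and hence the same $\gG$. (If one wishes, one can also note via the concavity of $j \mapsto \sum_{\ell\le j}\prod_{m\le\ell}\alpha$ in the relevant discrete sense that this balanced profile is in fact optimal, reconfirming consistency with the first part.)

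The main obstacle I anticipate is the step-independence argument: one must carefully justify that, under Assumption~\ref{assu:equal_acc}, the distribution of acceptance rates for the \emph{remaining} tokens after a verification step does not depend on what was verified before — the paper itself flags in \cref{sec:problem} that in general ``the relationship between previously verified tokens and the distribution of acceptance rate for the remaining tokens is extremely complex.'' Under the assumption this complexity collapses, but writing down why (the $j$-th-sequence acceptance rate is a fixed constant regardless of the realized prefix, so conditioning on past acceptances/rejections is vacuous for the forward-looking objective) needs to be done cleanly, since it is precisely the hinge that lets per-step optimality bootstrap to global optimality. The balancedness claim for the ``moreover'' part is a secondary, more mechanical obstacle — it requires an induction on heap extractions showing the invariant that row lengths never differ by more than one.
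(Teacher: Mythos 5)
Your overall skeleton is the same as the paper's: under \cref{assu:equal_acc} the acceptance rates are history-independent, so the per-step selection problem is identical at every decoding step, and the per-step optimality of \cref{thm:tetris} therefore lifts to optimality of $\gG$. The paper's own proof is a four-line version of exactly this argument (``since the state at each step is identical, a per-step optimal strategy is also globally optimal'') and does not even spell out the ``moreover'' clause, which you handle via the balancedness-of-the-heap invariant; in that respect you have done more than the paper, and your flagged ``main obstacle'' (why conditioning on past acceptances is vacuous under the assumption) is precisely the point the paper glosses over.

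One point to correct: you have misread the index in \cref{assu:equal_acc}. The ``$j$-th sequence'' is the $j$-th \emph{request} in the batch, so $j$ ranges over $[N]$ and $\alpha_j$ is a per-request constant acceptance rate --- as the ``moreover'' hypothesis $\alpha_1=\cdots=\alpha_N$ and the discussion in \cref{app:related-work-assumption} make clear --- not the $j$-th position within a draft window. Under your reading all rows are statistically identical, \alg{} always returns the balanced profile, the first claim becomes near-trivial, and the ``moreover'' clause is vacuous rather than a special case. Under the correct reading the expected yield of a window of length $k_i$ in row $i$ is $\sum_{m=1}^{k_i}\alpha_i^{m}$, a function of both $k_i$ and $i$ rather than of $k_i$ alone, and \alg{} generally allocates unequal windows favoring high-$\alpha_i$ requests. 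Fortunately your two load-bearing steps survive the correction: step-invariance of the objective still holds because the $\alpha_i$ do not depend on $t$ or on realized prefixes, and in the ``moreover'' case the marginal gain of extending row $i$ is $\alpha^{k_i+1}$, which still depends only on the current row length, so the balancedness induction and the identification with standard batched speculative decoding go through unchanged.
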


The proof is delayed to \cref{app:thm_global_opt_tetris}.
Overall, we established both per-step and global optimality of \alg{} under theoretical assumptions.
Similar assumptions are commonly made in the literature~\citep{leviathan2023,liu2024optimizingspeculativedecodingserving} to enable theoretical insights (more details in~\cref{app:related-work-assumption}), and \alg{} demonstrates strong empirical performance even when this assumption is violated, as we show later in~\cref{sec:experiment}.
In practice, the drafting time can be hidden with appropriately designed pipeline~\citep{liu2024parallelspeculativedecodingadaptive,wang2024minions} which parallelizes the execution of the draft model and the target model.\footnote{Although, they have yet been integrated in popular battle-tested model serving frameworks such as \vllm{}~\citep{kwon2023vllm} and \sglang{}~\citep{zheng2024sglangefficientexecutionstructured} as of this writing.}
The true acceptance rates are inaccessible in practice, we thus rely on surrogate measures and show their empirical effectiveness, which we will discuss next.

\subsection{Practical Implementations}\label{sec:practical-implementations}

The acceptance rate of a draft token depends on $\max(p_\gM(d_i)/p_\gS(d_i),1)$. 
However, the \alg{} manager does not have access to $p_\gM(\cdot)$ before verification.
In practice, we use the draft model's output probability as a surrogate of the token acceptance rate~\citep{kim2023speculativedecodingbiglittle,zhang2024skiplayer}.
We show in~\cref{sec:experiment} that this surrogate empirically results in strong performance.
While prior works such as EAGLE-2~\citep{li2024eagle} and MDSD~\citep{hu2025mdsd} have adopted greedy token selection based on draft model probabilities on a single request, \alg{}' greedy algorithm operates at the
batch level to optimize resource utilization across multiple requests, where we defer a more detailed discussion to~\cref{app:related-work-greedy}.
Additionally, while we theoretically show that~\cref{alg:tetris} achieves a time complexity of $\gO(C\log N)$ (see~\cref{app:lem_time_comp}), we can additionally leverage the parallelism of GPU to achieve empirical negligible overhead of using \alg{} ($< 0.3 \text{ms}$ compared to the average draft time per token of $> 2.5\text{ms}$) via the \texttt{scatter\_max} operation directly implemented on GPU.
Lastly, the autoregressive token drafting can also be parallelized across requests.
Hence, drafting a batch of requests with a common window size of $k$ tokens takes the same time as a single request in practice.

    \section{Experiments}
    \label{sec:experiment}

We evaluate the effectiveness and efficiency of \alg{} against baseline methods.
We first validate the necessity of dynamic draft token selection and improvement of token acceptance with \alg{} in~\cref{sec:variation-draft-quality,sec:effect-extra-proposal}.
Then, we show the empirical end-to-end speedup in~\cref{sec:evaluation}.
We also discuss the potential further improvement in empirical results with the future implementation of speculative decoding pipelines in~\cref{sec:parallel-implementation}.
Our code is available at~\url{https://github.com/ZhaoxuanWu/Tetris}.

\begin{table}[b]
\centering
\caption{Server and model configurations. TP indicates the tensor parallel size used for model serving.}
\label{tab:server_config}
\resizebox{1\linewidth}{!}{
\begin{tabular}{@{}clll@{}}
\toprule
\textbf{Setting} & \textbf{Draft Model (TP)} & \textbf{Target Model (TP)} & \textbf{GPU (VRAM)} \\ 
\midrule
1 & Vicuna-68M (1) & Vicuna-33B (4) & 4$\times$L40 (180G) \\
2 & Llama-1B-FP8 (1) & Llama-70B (8) & 8$\times$L40 (360G) \\ 
3 &Llama-1B-FP8 (1) & Llama-405B-FP8 (8) & 8$\times$H100 (640G) \\ 
\bottomrule
\end{tabular}
}
\end{table}

\begin{figure*}[t]
    \centering
    \begin{minipage}{0.48\linewidth}
        \centering
        \includegraphics[width=0.9\linewidth]{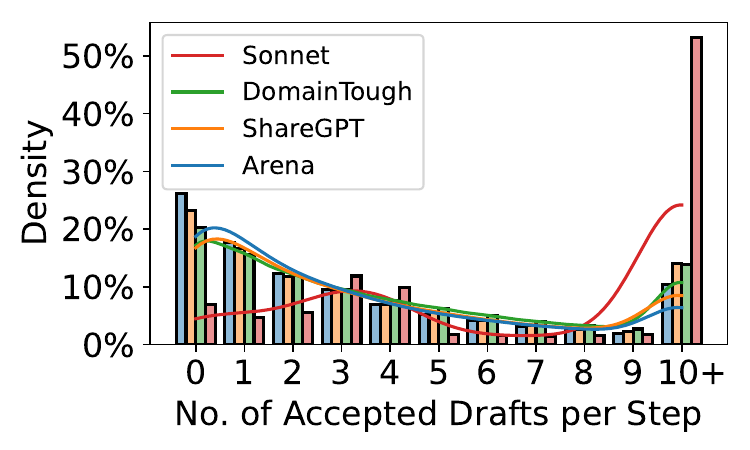}
        \vspace{-3mm} 
        \caption{The distribution of the number of accepted tokens per speculative decoding step for various tasks.}
        \label{fig:accepted_tokens}
    \end{minipage}\hfill
    \begin{minipage}{0.48\linewidth}
        \centering
        \includegraphics[width=0.9\linewidth]{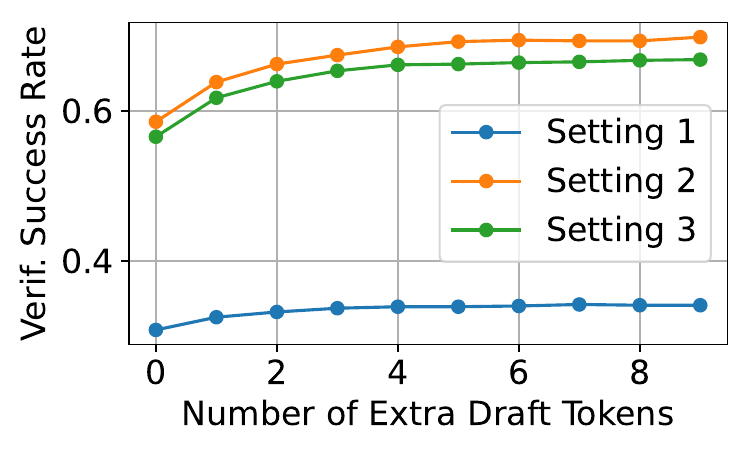}
        \vspace{-3mm} 
        \caption{Change in VSR as the number of \textit{extra draft tokens} increases. Base draft length $k$ is set to 4, results for other $k$'s are in~\cref{app:add-plots-extra-params}.}
        \label{fig:extra_params}
    \end{minipage}
\end{figure*}

\paragraph{Settings.}
We perform experiments on target models of various parameter sizes, including \textit{Vicuna-33B-v1.3}, \textit{Llama-3.1-70B-Instruct}, and \textit{Llama-3.1-405B-Instruct}.
We use \textit{Vicuna-68M} and \textit{Llama-3.2-1B-Instruct} as their respective draft models.
Depending on the size of the models, different server configurations and tensor parallel sizes are adopted, detailed in \cref{tab:server_config}.
\alg{} is evaluated for generation of answer completion for questions extracted from ShareGPT~\citep{sharegpt-dataset}, Chatbot Arena~\citep{zheng2023arena}, Domain Tough Questions~\citep{yav-ai2024domain-tough}, and synthetic tasks generated from Shakespeare's The Sonnet.
The standard speculative decoding (SD)~\citep{leviathan2023} and dynamic speculative decoding (DSD)~\citep{liu2024optimizingspeculativedecodingserving} are baseline methods that we compare to.
We vary the drafting window sizes, allowing up to 3 extra draft tokens for \alg{} while keeping the same number of tokens sent for verification by the target model (i.e., fixing the inference capacity) for fair comparison to baseline methods.
\alg{} is implemented in \vllm{}~\citep{kwon2023vllm}.


\subsection{Variations in Draft Quality}\label{sec:variation-draft-quality}

We begin by emphasizing the importance of setting an appropriate draft window size.
Using Setting 2, we collect the oracle optimal draft window size to adopt for each SD step.
Notably, the results in~\cref{fig:accepted_tokens} show flat curves with long-tail distributions for various datasets, revealing significant variations in optimal window size per step.
This diversity highlights the potential suboptimality of a fixed draft window, as it fails to adapt to the inherent characteristics of the draft-target model combination or a batch of sequences.
By tailoring the draft token selection in a batch, \alg{} is expected to achieve higher efficiency and better alignment with the model's token acceptance patterns, hence improving overall performance.

\subsection{Effect of Extra Draft Tokens}\label{sec:effect-extra-proposal}

Having extra draft tokens provides \alg{} with greater flexibility in selecting which draft tokens to send for verification.
To empirically show this effect, we define the verification success rate (VSR), 
\begin{equation}\label{eq:VSR}
   \textstyle \textit{VSR} = \frac{\textit{Accepted tokens}}{\textit{Tokens sent for verification}} \ ,
\end{equation}
which measures the quality of the draft tokens selected by \alg{}.
Fixing the total number of tokens sent for verification, we show in~\cref{fig:extra_params} that increasing the number of extra draft tokens consistently increases the VSR metric across all settings.
This finding confirms the effectiveness of \alg{}'s strategy for draft token selection utilizing extra draft tokens.
It also validates the empirical usefulness of the draft model's output probabilities as a surrogate of the selection criteria, as stated in~\cref{sec:practical-implementations}.

\subsection{Evaluation of \alg}\label{sec:evaluation}

\begin{figure*}[t]
    \centering
    \setlength{\tabcolsep}{1pt} 
    \resizebox{0.99\linewidth}{!}{
    \begin{tabular}{cccc}
        & \hspace{8mm}\textbf{ShareGPT} & \hspace{8mm}\textbf{Arena} & \hspace{8mm}\textbf{Tough} \\
        \rotatebox{90}{\parbox{2.5cm}{\centering \hspace{8mm}\textbf{Setting 1}}} & \includegraphics[width=0.31\linewidth]{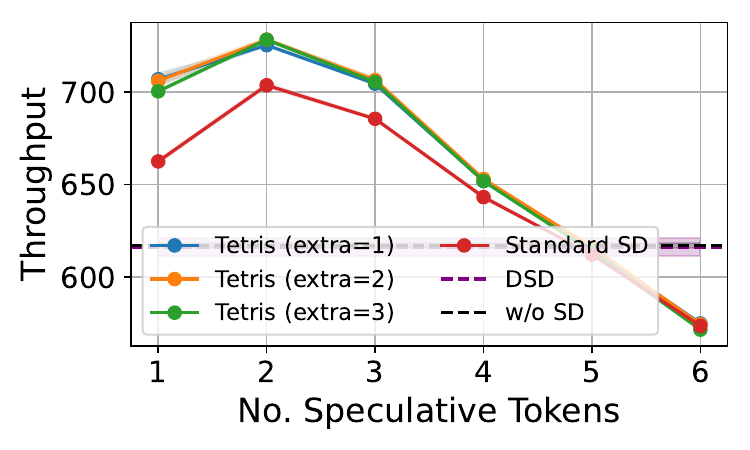} &
        \includegraphics[width=0.31\linewidth]{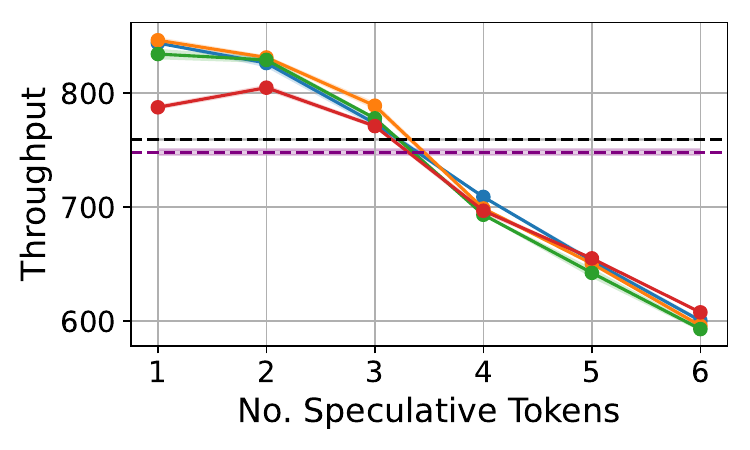} &
        \includegraphics[width=0.31\linewidth]{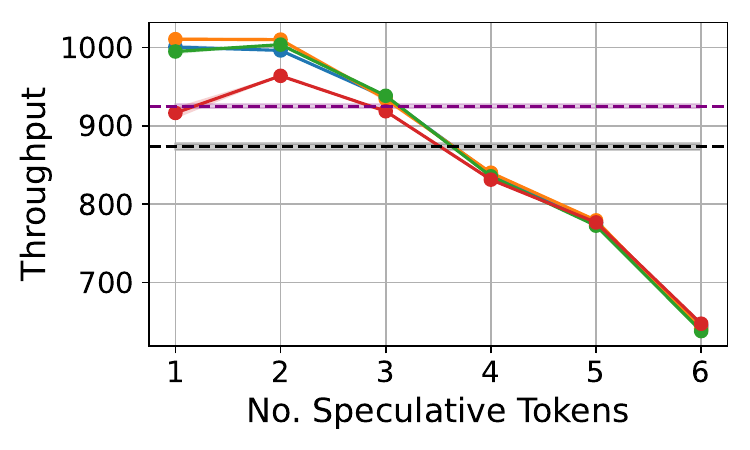} \\
        & (a) \hspace{5mm} $\uparrow$ 3.50\%, $\Delta$ 6.70\% & (b) \hspace{5mm} $\uparrow$ 5.17\%, $\Delta$ 7.47\% & (c) \hspace{5mm} $\uparrow$ 4.85\%, $\Delta$ 9.27\% \\
        
        \rotatebox{90}{\parbox{2.5cm}{\centering \hspace{8mm}\textbf{Setting 2}}} & \includegraphics[width=0.31\linewidth]{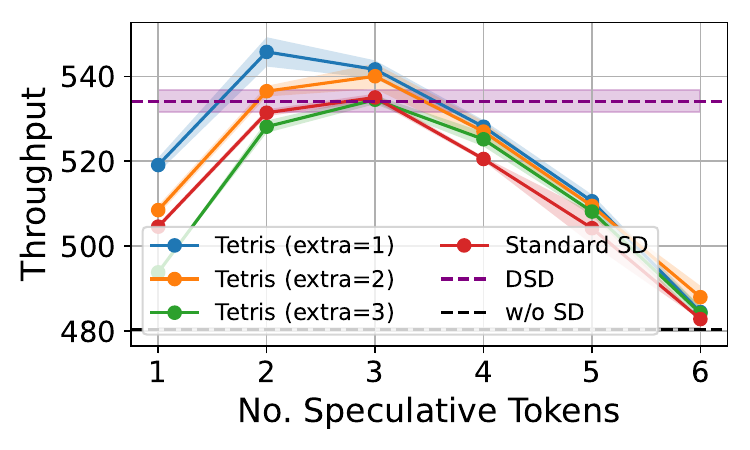} & \includegraphics[width=0.31\linewidth]{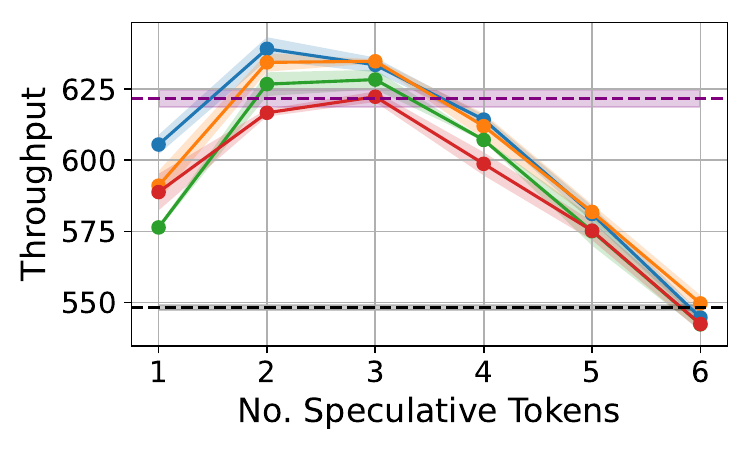} & \includegraphics[width=0.31\linewidth]{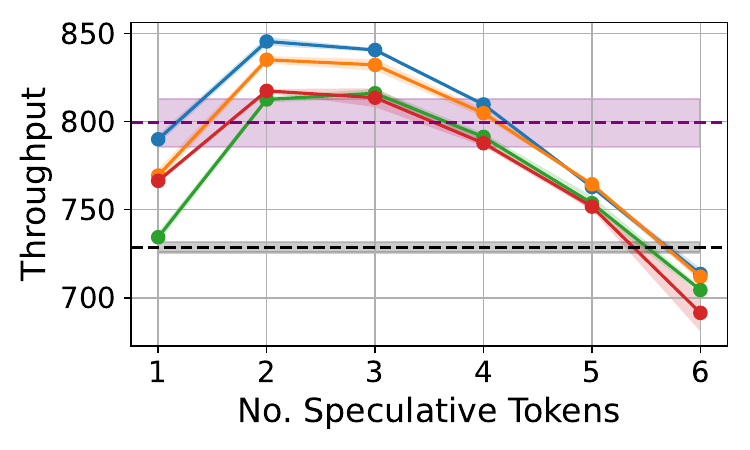}  \\
        & (d) \hspace{5mm} $\uparrow$ 2.01\%, $\Delta$ 2.17\% & (e) \hspace{5mm} $\uparrow$ 2.71\%, $\Delta$ 2.81\% & (f) \hspace{5mm} $\uparrow$ 3.43\%, $\Delta$ 3.43\% \\
        
        \rotatebox{90}{\parbox{2.5cm}{\centering \hspace{8mm}\textbf{Setting 3}}} & \includegraphics[width=0.31\linewidth]{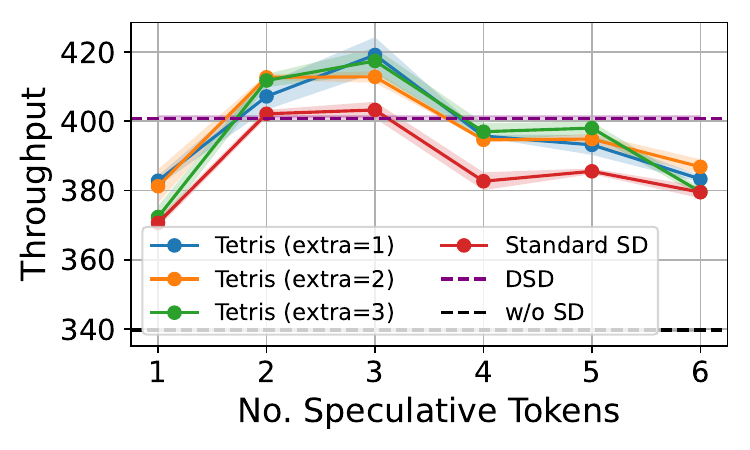} & 
        \includegraphics[width=0.31\linewidth]{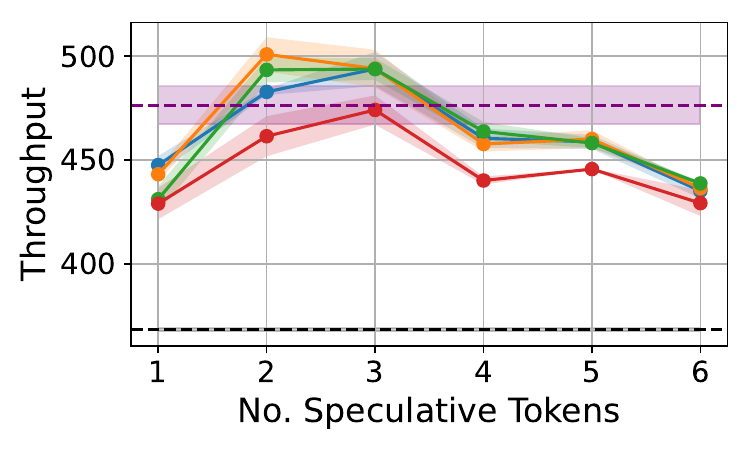} &
        \includegraphics[width=0.31\linewidth]{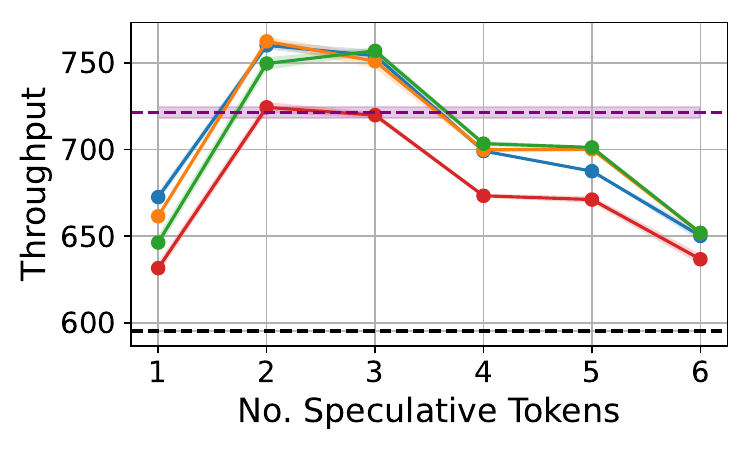} \\
        & (g) \hspace{5mm} $\uparrow$ 3.93\%, $\Delta$ 3.93\% & (h) \hspace{5mm} $\uparrow$ 5.15\%, $\Delta$ 5.15\% & (i) \hspace{5mm} $\uparrow$ 5.25\%, $\Delta$ 5.25\% \\
    \end{tabular}}
    \caption{Throughput comparison for various methods across experimental settings. $\uparrow$ indicates the improvement over the best baseline method. $\Delta$ indicates the maximum gap between \alg{} and standard SD. The reported numbers reflect the mean and standard deviation over 3 independent trials.}
    \label{fig:throughput-all}
\end{figure*}

To evaluate the effectiveness of \alg{}, we perform comprehensive experiments on various datasets and report metrics, including the total throughput and end-to-end latency.
We compare to standard SD and DSD.
Throughout the experiments, we maintain a consistent system load of 64 batched requests to ensure consistency, reproducibility, and fairness in comparisons. Note that all experiments include drafting time.

\paragraph{Total Throughput.}
We measure the performance of a speculative decoding method using the total throughput, which includes both accepted draft tokens by the target model and the bonus tokens, which make up the final completion.
As shown in \cref{fig:throughput-all}, \alg{} achieves up to approximately 5.25\% improvement in terms of total throughput compared to the best baseline, depending on the draft-target setting and the nature of the task performed.
The maximum gap between \alg{} and standard SD is up to 9.27\%.
Importantly, \alg{} consistently outperforms the standard SD and DSD across all settings of the draft window sizes.
This shows the robustness of \alg{} to different hyperparameter choices.
Additionally, it is evident that having more speculative tokens (i.e., a larger draft window size) does not always improve the performance, as having too many parallel executions of the target model exceeding the servers' parallel inference capacity degrades performance.

Empirically, we observe that \alg{} achieves optimal performance when the number of extra draft tokens is set to 1 or 2.
These results are partly attributed to the current sequential draft-target implementation for the speculative decoding pipeline, as more extra draft tokens take time to generate autoregressively.
Remarkably, this pipeline can be better designed to amplify the benefit of \alg{}, which we defer the discussion to \cref{sec:parallel-implementation}.
Moreover, while DSD is expected to outperform standard SD, we note that it is not always the case in empirical experiments.
This behavior may result from the difficulty of accurately estimating the conditional token acceptance rate in practice\footnote{Inaccurate conditional acceptance rate estimation results in inaccurate calculation of expected generation token counts.} and the quality of the fitted latency prediction model.

\paragraph{End-to-end Latency.}
We also measure the end-to-end latency of each request, defined as the time from sending the request to receiving the final response from the \vllm{} server on the client side.
This metric measures the average latency of the speculative decoding system in finishing completions, which can affect user satisfaction.
We summarize the results in~\cref{tab:latency-all} and defer the figures to~\cref{app:latency-plots}.
Overall, \alg{} achieves up to 6.13\% improvement in latency as compared to the best baseline and up to 9.32\% improvement against standard SD.

\begin{table}[t]
    \centering
    \caption{Improvement in end-to-end latency. Refer to~\cref{fig:throughput-all} for definitions of $\uparrow$ and $\Delta$. The reported numbers reflect the mean over 3 independent trials.}
    \label{tab:latency-all}
    \resizebox{0.99\linewidth}{!}{
    \begin{tabular}{c|cc|cc|cc}
    \toprule
    \multirow{2}{*}{Setting} & \multicolumn{2}{c|}{\textbf{ShareGPT}} & \multicolumn{2}{c|}{\textbf{Arena}} & \multicolumn{2}{c}{\textbf{Tough}} \\ 
    & $\uparrow$ & $\Delta$ & $\uparrow$ & $\Delta$ & $\uparrow$ & $\Delta$ \\
    \midrule
    \multirow{1}{*}{1} & 3.42\% & 6.05\% & 5.30\% & 6.30\% & 5.47\% & 9.32\% \\
    \midrule
    \multirow{1}{*}{2} & 2.65\% & 2.70\% & 3.86\% & 3.86\% & 3.65\% & 3.65\% \\
    \midrule
    \multirow{1}{*}{3} & 3.51\% & 4.52\% & 6.13\% & 6.13\% & 4.49\% & 4.68\% \\
    \bottomrule
    \end{tabular}}
\end{table}

\begin{figure*}[t]
    \centering
    \includegraphics[width=0.6\linewidth]{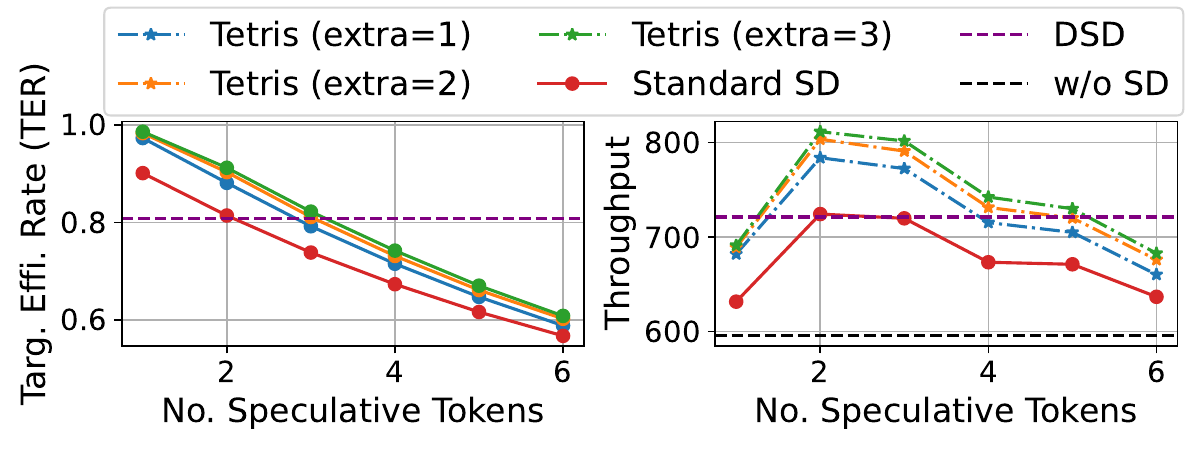}
    \caption{Left: Baseline comparisons for \textit{TER} in different speculative configurations. Right: \textit{Projected} $\hat{\gG}^{(\textit{TER})}_k$ plot for \alg{} with baselines.}
    \label{fig:TER-figure}
\end{figure*}

\begin{figure}[t]
    \centering
    \includegraphics[width=0.99\linewidth]{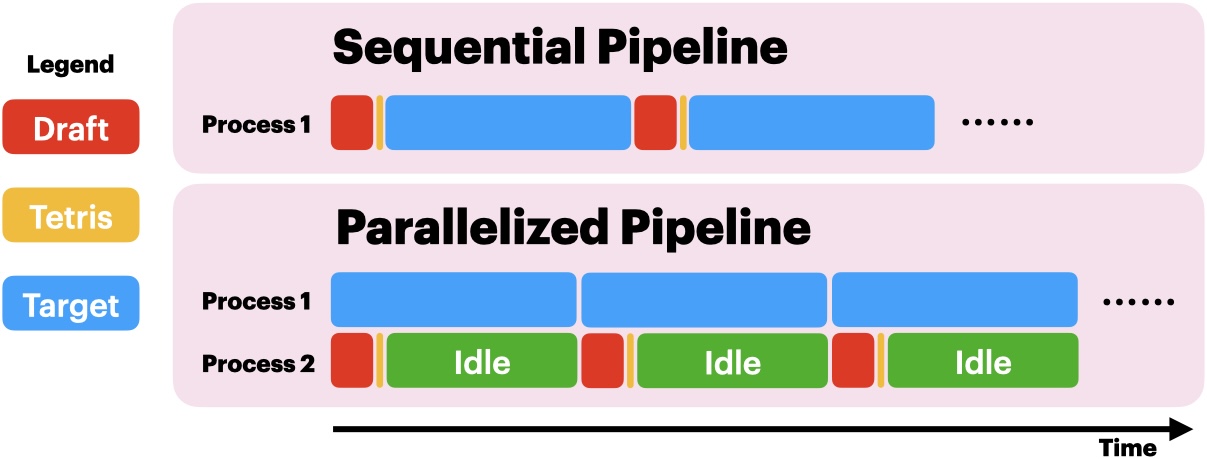}
    \caption{Parallelized pipeline for speculative decoding, where the draft model and \alg{} runtime can be hidden entirely through parallelization.}
    \label{fig:pipeline}
\end{figure}

\subsection{Potentially Parallelized Pipeline}\label{sec:parallel-implementation}
We implement \alg{} to work with the \vllm{} library, one of the most efficient frameworks for LLM inference~\citep{kwon2023vllm}.
\vllm{} adopts a sequential pipeline for speculative decoding, where the target model runs sequentially after the draft model finishes generating draft tokens.
As illustrated in~\cref{fig:pipeline}, \alg{} is integrated between the draft and target models.
However, in such a sequential pipeline, \alg{} cannot fully realize its potential as the extra draft tokens incur additional computational time.

Recent works such as Minions~\citep{wang2024minions} and PEARL~\citep{liu2024parallelspeculativedecodingadaptive} have started exploring the benefits of a parallelized pipeline with two processes concurrently running the draft and target models as illustrated in~\cref{fig:pipeline}.
Given that the draft model runs significantly faster than the target model, the draft time, as well as the time to run our \alg{}, can be hidden entirely in the parallelized pipeline.
Moreover, the idle time (marked in green) of Process 2 between steps can be utilized to draft more extra tokens of \alg{} or to run more complex algorithms.

Under the constraint of sequential pipelines in \vllm{}, we instead adopt an alternative performance metric that better captures the potential advantages of \alg{} in parallelized pipelines.
We use the target efficiency rate (TER) defined as follows,
\begin{equation}\label{eq:TER}
  \textstyle  \textit{TER} = \frac{\textit{Accepted tokens + Bonus tokens}}{\textit{Max possible number of tokens if all accepted}} \ .
\end{equation}
As \textit{TER} measures the efficiency of target model verifications and is unaffected by the drafting process and \alg{} runtime, it provides an accurate indication of the net benefit of \alg{}.
In~\cref{fig:TER-figure}, we demonstrate a case study for Setting 3 on Tough dataset: The improvement of \textit{TER} is first calculated from the left figure, and is then used to compute the \textit{projected throughput} $\hat{\gG}^{(\textit{TER})}_k$, following
\begin{equation*}
     \textstyle \hat{\gG}^{(\textit{TER})}_k = \gG_{\text{SD}, k} \times \frac{ (\textit{TER}_{\alg, k} - \textit{TER}_{\text{SD},k})} {\textit{TER}_{\text{SD},k}} \ ,
\end{equation*}
where $k$ is the number of speculative tokens (i.e., draft window size) and $\gG$ represents throughput.
Consequently, using \alg{} is \textit{projected} (i.e., not realized in the current implementation) to achieve 12.04\% improvement for this setting under parallelized pipeline.
The full results are shown in~\cref{tab:potential} and the figures are shown in~\cref{app:projected-improvement-plots}.

\begin{table}[t]
    \centering
    \caption{Projected throughput $\hat{\gG}^{(\textit{TER})}$ improvement based on \textit{TER} metric improvement, realizable under a parallelized speculative decoding pipeline.}
    \label{tab:potential}
    \resizebox{0.7\linewidth}{!}{
        \begin{tabular}{@{}clcc@{}}
            \toprule
            \textbf{Setting} & \textbf{Dataset} & \textbf{$\gG$$\uparrow$} & \textbf{$\hat{\gG}^{(\textit{TER})}$$\uparrow$} \\ 
            \midrule
            \multirow{3}{*}{1} 
            & ShareGPT & 3.50\% & 9.70\% \\
            & Arena & 5.17\% & 7.79\% \\
            & Tough & 4.85\% & 8.92\% \\
            \midrule
            \multirow{3}{*}{2} 
            & ShareGPT & 2.01\% & 11.70\% \\
            & Arena & 2.71\% & 11.17\% \\
            & Tough & 3.43\% & 11.91\% \\
            \midrule
            \multirow{3}{*}{3} 
            & ShareGPT & 3.93\% & 11.67\% \\
            & Arena & 5.15\% & 10.53\% \\
            & Tough & 5.25\% & 12.04\% \\
            \bottomrule
        \end{tabular}
    }
\end{table}

\subsection{Ablation Study}\label{sec:ablation-study}

\paragraph{Robustness to Variations in Draft Quality.}
We artificially introduce additional variations in draft quality by mixing datasets of different difficulty levels.
We create synthetic prompts designed for models to repeat lines from a poem named Sonnet.
Since Sonnet is relatively easy for the small draft model, it achieves a high rate of successful verification by the target model.
We then construct a new dataset, Mix, by randomly mixing Sonnet and a more challenging dataset, Tough, in equal proportions.
As shown~\cref{tab:ablation-mix}, the performance improvement of \alg{} over the best baseline suffers only a marginal or no decline, indicating its robustness to substantial variations in draft quality.
\begin{table}[t]
    \centering
    \caption{\alg{} improvement in throughput for ablation study of robustness to variations in draft quality.}
    \label{tab:ablation-mix}
    \resizebox{0.7\linewidth}{!}{
        \begin{tabular}{cccc}
            \toprule
            \multirow{1}{*}{Setting} & \multicolumn{1}{c}{\textbf{Sonnet}} & \multicolumn{1}{c}{\textbf{Tough}} & \multicolumn{1}{c}{\textbf{Mix}}  \\ 
            \midrule
            \multirow{1}{*}{1} & 2.46\% &  4.85\% &  4.12\% \\
            \multirow{1}{*}{2} & -0.81\% & 3.43\% & 3.48\% \\
            \multirow{1}{*}{3} & 2.07\% &  5.25\% & 4.24\% \\
            \bottomrule
        \end{tabular}
    } 
\end{table}

\paragraph{Extension to Medusa.}
The Medusa model generates multiple subsequent draft tokens using a single forward pass (as opposed to autoregressive generation) through multiple decoding heads~\citep{cai2024medusa}.
Leveraging Medusa, it is possible to generate extra draft tokens for \alg{} at minimal marginal computational cost.
We show in~\cref{app:medusa-extension} that integrating \alg{} to Medusa achieves a 3.19\% improvement in total throughput.

\paragraph{Other Ablations.}
We also include ablations on \alg{}'s improvement in verification success rate (VSR) in~\cref{app:improve-VSR}, and the effect of batch size on the performance in~\cref{app:batch-size}.

    \section{Conclusion and Future Work}
    \label{sec:}

In this paper, we study the problem of optimizing batch speculative decoding to maximize throughput in multi-request settings, such as those faced by model service providers.
To this end, we propose \alg{}, a novel method that efficiently selects optimal draft tokens for the LLM verification in log-linear time.
We have theoretically shown that, in the absence of drafting time, \alg{} achieves optimal throughput both at each decoding step and globally under reasonable assumptions about token acceptance rates.
Our empirical results further validate that \alg{} consistently outperforms standard speculative decoding and existing dynamic draft window selection methods, even when accounting for the extra time required for drafting extra tokens. 
These results highlight the potential of \alg{} to improve inference efficiency in real-world model service deployments.
A key future direction is adapting \alg{} to tree decoding, a notable feature in recent advancements in speculative decoding~\citep{cai2024medusa,li2024eagle2,li2024eagle}. Another interesting direction to explore is to design better draft token selection techniques. Insights can be derived from existing works on data selection~\citep{lin2024distributionally,zhou2022probablyapproximateshapleyfairness,xu2024datacentricaiagelarge,zhou2024detail} that find important data points. Similarly, further research can investigate how to leverage the probability distribution between the draft model and the target model to improve the selection efficiency.

    \section{Limitations}

In this paper, our empirical experiments only demonstrate results using the current sequential speculative decoding pipeline implemented on \vllm. 
That is, the target model stays idle while waiting for draft tokens from the draft model. 
Consequently, the performance improvement of \alg{} is heavily dependent on the trade-off between the additional runtime required to generate extra draft tokens and the gain in token acceptance achieved through \alg{}.
Such trade-off limits the practical effectiveness of \alg{}, especially when a slow draft model is required.
We anticipate that future implementations of a parallelized pipeline could potentially reveal greater speedups with \alg{}.
However, we have not yet integrated such features into \vllm{} for testing in empirical experiments.

    \section*{Acknowledgments}
    This research is supported by the National Research Foundation (NRF), Prime Minister's Office, Singapore under its Campus for Research Excellence and Technological Enterprise (CREATE) programme. The Mens, Manus, and Machina (M3S) is an interdisciplinary research group (IRG) of the Singapore MIT Alliance for Research and Technology (SMART) centre.
    
    \bibliography{references}

\begin{thebibliography}{45}
\providecommand{\natexlab}[1]{#1}

\bibitem[{Agrawal et~al.(2024)Agrawal, Jeon, and
  Lee}]{agrawal2024adaedlearlydraftstopping}
Sudhanshu Agrawal, Wonseok Jeon, and Mingu Lee. 2024.
\newblock Adaedl: Early draft stopping for speculative decoding of large
  language models via an entropy-based lower bound on token acceptance
  probability.
\newblock In \emph{NeurIPS Workshop on Efficient Natural Language and Signal
  Processing}.

\bibitem[{Anon(2023)}]{sharegpt-dataset}
anon8231489123 Anon. 2023.
\newblock Sharegpt dataset.
\newblock
  \url{https://huggingface.co/datasets/anon8231489123/ShareGPT_Vicuna_unfiltered}.

\bibitem[{Cai et~al.(2024)Cai, Li, Geng, Peng, Lee, Chen, and
  Dao}]{cai2024medusa}
Tianle Cai, Yuhong Li, Zhengyang Geng, Hongwu Peng, Jason~D. Lee, Deming Chen,
  and Tri Dao. 2024.
\newblock Medusa: Simple llm inference acceleration framework with multiple
  decoding heads.
\newblock In \emph{Proc. ICML}, pages 5209--5235.

\bibitem[{Chen et~al.(2024)Chen, May, Svirschevski, Huang, Ryabinin, Jia, and
  Chen}]{chen2024sequoia}
Zhuoming Chen, Avner May, Ruslan Svirschevski, Yu-Hsun Huang, Max Ryabinin,
  Zhihao Jia, and Beidi Chen. 2024.
\newblock Sequoia: Scalable and robust speculative decoding.
\newblock In \emph{Proc. NeurIPS}, pages 129531--129563.

\bibitem[{Cheng et~al.(2024)Cheng, Zhang, Zhang, Wang, and
  Wang}]{cheng2024recurrentdrafterfastspeculative}
Yunfei Cheng, Aonan Zhang, Xuanyu Zhang, Chong Wang, and Yi~Wang. 2024.
\newblock Recurrent drafter for fast speculative decoding in large language
  models.
\newblock \emph{arXiv:2403.09919}.

\bibitem[{DeepSeek-AI et~al.(2024)DeepSeek-AI, Liu, Feng, Xue, Wang, Wu, Lu,
  Zhao, Deng, Zhang, Ruan, and
  et~al.}]{deepseekai2024deepseekv3technicalreport}
DeepSeek-AI, Aixin Liu, Bei Feng, Bing Xue, Bingxuan Wang, Bochao Wu, Chengda
  Lu, Chenggang Zhao, Chengqi Deng, Chenyu Zhang, Chong Ruan, and et~al. 2024.
\newblock Deepseek-v3 technical report.
\newblock \emph{arXiv:2412.19437}.

\bibitem[{{Fireworks AI}(2025)}]{fireworksai}
{Fireworks AI}. 2025.
\newblock Fireworks ai: High-performance inference platform.
\newblock \url{https://fireworks.ai/}.
\newblock Accessed: 2025-02-10.

\bibitem[{Fu et~al.(2024)Fu, Zhu, Su, Qiao, Stoica, and
  Zhang}]{fu2024efficient}
Yichao Fu, Siqi Zhu, Runlong Su, Aurick Qiao, Ion Stoica, and Hao Zhang. 2024.
\newblock Efficient {LLM} scheduling by learning to rank.
\newblock In \emph{Proc. NeurIPS}, pages 59006--59029.

\bibitem[{Hu et~al.(2025)Hu, Zheng, Viswanathan, Chen, Rossi, Wu, Manocha, and
  Huang}]{hu2025mdsd}
Zhengmian Hu, Tong Zheng, Vignesh Viswanathan, Ziyi Chen, Ryan~A. Rossi, Yihan
  Wu, Dinesh Manocha, and Heng Huang. 2025.
\newblock Towards optimal multi-draft speculative decoding.
\newblock In \emph{Proc. ICLR}.

\bibitem[{Huang et~al.(2024)Huang, Guo, and
  Wang}]{huang2024specdecboostingspeculativedecoding}
Kaixuan Huang, Xudong Guo, and Mengdi Wang. 2024.
\newblock Specdec++: Boosting speculative decoding via adaptive candidate
  lengths.
\newblock \emph{arXiv:2405.19715}.

\bibitem[{Kim et~al.(2023)Kim, Mangalam, Moon, Malik, Mahoney, Gholami, and
  Keutzer}]{kim2023speculativedecodingbiglittle}
Sehoon Kim, Karttikeya Mangalam, Suhong Moon, Jitendra Malik, Michael~W.
  Mahoney, Amir Gholami, and Kurt Keutzer. 2023.
\newblock Speculative decoding with big little decoder.
\newblock In \emph{Proc. NeurIPS}, pages 39236--39256.

\bibitem[{Kim et~al.(2024)Kim, Suresh, Papineni, Riley, Kumar, and
  Benton}]{kim2024accelerating}
Taehyeon Kim, Ananda~Theertha Suresh, Kishore~A Papineni, Michael Riley, Sanjiv
  Kumar, and Adrian Benton. 2024.
\newblock Accelerating blockwise parallel language models with draft
  refinement.
\newblock In \emph{Proc. NeurIPS}, pages 34294--34321.

\bibitem[{Kwon et~al.(2023)Kwon, Li, Zhuang, Sheng, Zheng, Yu, Gonzalez, Zhang,
  and Stoica}]{kwon2023vllm}
Woosuk Kwon, Zhuohan Li, Siyuan Zhuang, Ying Sheng, Lianmin Zheng, Cody~Hao Yu,
  Joseph Gonzalez, Hao Zhang, and Ion Stoica. 2023.
\newblock Efficient memory management for large language model serving with
  pagedattention.
\newblock In \emph{Proc. SOSP}.

\bibitem[{Leviathan et~al.(2023)Leviathan, Kalman, and Matias}]{leviathan2023}
Yaniv Leviathan, Matan Kalman, and Yossi Matias. 2023.
\newblock Fast inference from transformers via speculative decoding.
\newblock In \emph{Proc. ICML}, pages 19274--19286.

\bibitem[{Li et~al.(2024{\natexlab{a}})Li, Wei, Zhang, and
  Zhang}]{li2024eagle2}
Yuhui Li, Fangyun Wei, Chao Zhang, and Hongyang Zhang. 2024{\natexlab{a}}.
\newblock {EAGLE}-2: Faster inference of language models with dynamic draft
  trees.
\newblock In \emph{Proc. EMNLP}, pages 7421--7432.

\bibitem[{Li et~al.(2024{\natexlab{b}})Li, Wei, Zhang, and Zhang}]{li2024eagle}
Yuhui Li, Fangyun Wei, Chao Zhang, and Hongyang Zhang. 2024{\natexlab{b}}.
\newblock {EAGLE}: Speculative sampling requires rethinking feature
  uncertainty.
\newblock In \emph{Proc. ICML}, pages 28935--28948.

\bibitem[{Lin et~al.(2024)Lin, Xu, Wu, Ng, and Low}]{lin2024distributionally}
Xiaoqiang Lin, Xinyi Xu, Zhaoxuan Wu, See-Kiong Ng, and Bryan Kian~Hsiang Low.
  2024.
\newblock Distributionally robust data valuation.
\newblock In \emph{Proc. ICML}.

\bibitem[{Liu et~al.(2024{\natexlab{a}})Liu, Tang, Liu, Ni, Tang, Han, and
  Wang}]{liu2024kangaroo}
Fangcheng Liu, Yehui Tang, Zhenhua Liu, Yunsheng Ni, Duyu Tang, Kai Han, and
  Yunhe Wang. 2024{\natexlab{a}}.
\newblock Kangaroo: Lossless self-speculative decoding for accelerating {LLM}s
  via double early exiting.
\newblock In \emph{Proc. NeurIPS}, pages 11946--11965.

\bibitem[{Liu et~al.(2024{\natexlab{b}})Liu, Wu, Chung, Lai, Lee, and
  Chowdhury}]{liu2024andes}
Jiachen Liu, Zhiyu Wu, Jae-Won Chung, Fan Lai, Myungjin Lee, and Mosharaf
  Chowdhury. 2024{\natexlab{b}}.
\newblock {Andes: Defining and Enhancing Quality-of-Experience in LLM-Based
  Text Streaming Services}.
\newblock \emph{arXiv:2404.16283}.

\bibitem[{Liu et~al.(2024{\natexlab{c}})Liu, Li, Lv, Liu, Zhu, and
  Hu}]{liu2024parallelspeculativedecodingadaptive}
Tianyu Liu, Yun Li, Qitan Lv, Kai Liu, Jianchen Zhu, and Winston Hu.
  2024{\natexlab{c}}.
\newblock Parallel speculative decoding with adaptive draft length.
\newblock \emph{arXiv:2408.11850}.

\bibitem[{Liu et~al.(2024{\natexlab{d}})Liu, Daniel, Hu, Kwon, Li, Mo, Cheung,
  Deng, Stoica, and Zhang}]{liu2024optimizingspeculativedecodingserving}
Xiaoxuan Liu, Cade Daniel, Langxiang Hu, Woosuk Kwon, Zhuohan Li, Xiangxi Mo,
  Alvin Cheung, Zhijie Deng, Ion Stoica, and Hao Zhang. 2024{\natexlab{d}}.
\newblock Optimizing speculative decoding for serving large language models
  using goodput.
\newblock \emph{arXiv:2406.14066}.

\bibitem[{Liu et~al.(2024{\natexlab{e}})Liu, Hu, Bailis, Cheung, Deng, Stoica,
  and Zhang}]{liu2024onlinespeculativedecoding}
Xiaoxuan Liu, Lanxiang Hu, Peter Bailis, Alvin Cheung, Zhijie Deng, Ion Stoica,
  and Hao Zhang. 2024{\natexlab{e}}.
\newblock Online speculative decoding.
\newblock In \emph{Proc. ICML}, pages 31131--31146.

\bibitem[{Mamou et~al.(2024)Mamou, Pereg, Korat, Berchansky, Timor, Wasserblat,
  and Schwartz}]{mamou2024dynamicspeculation}
Jonathan Mamou, Oren Pereg, Daniel Korat, Moshe Berchansky, Nadav Timor, Moshe
  Wasserblat, and Roy Schwartz. 2024.
\newblock Dynamic speculation lookahead accelerates speculative decoding of
  large language models.
\newblock \emph{arXiv:2405.04304}.

\bibitem[{Maslej et~al.(2024)Maslej, Fattorini, Perrault, Parli, Reuel,
  Brynjolfsson, Etchemendy, Ligett, Lyons, Manyika, Niebles, Shoham, Wald, and
  Clark}]{maslej2024ai-report}
Nestor Maslej, Loredana Fattorini, Raymond Perrault, Vanessa Parli, Anka Reuel,
  Erik Brynjolfsson, John Etchemendy, Katrina Ligett, Terah Lyons, James
  Manyika, Juan~Carlos Niebles, Yoav Shoham, Russell Wald, and Jack Clark.
  2024.
\newblock The {AI} index 2024 annual report.
\newblock Technical report, AI Index Steering Committee, Institute for
  Human-Centered AI, Stanford University.

\bibitem[{Miao et~al.(2024)Miao, Oliaro, Zhang, Cheng, Wang, Zhang, Wong, Zhu,
  Yang, Shi, Shi, Chen, Arfeen, Abhyankar, and Jia}]{miao2024specinfer}
Xupeng Miao, Gabriele Oliaro, Zhihao Zhang, Xinhao Cheng, Zeyu Wang, Zhengxin
  Zhang, Rae Ying~Yee Wong, Alan Zhu, Lijie Yang, Xiaoxiang Shi, Chunan Shi,
  Zhuoming Chen, Daiyaan Arfeen, Reyna Abhyankar, and Zhihao Jia. 2024.
\newblock Specinfer: Accelerating large language model serving with tree-based
  speculative inference and verification.
\newblock In \emph{Proc. ASPLOS}, page 932–949.

\bibitem[{Replicate(2025)}]{replicate}
Replicate. 2025.
\newblock {Replicate: Run AI with an API}.
\newblock \url{https://replicate.com/}.
\newblock Accessed: 2025-02-10.

\bibitem[{Rozière et~al.(2024)Rozière, Gehring, Gloeckle, Sootla, Gat, Tan,
  Adi, Liu, Sauvestre, Remez, Rapin, Kozhevnikov, Evtimov, Bitton, Bhatt,
  Ferrer, Grattafiori, Xiong, Défossez, Copet, Azhar, Touvron, Martin,
  Usunier, Scialom, and Synnaeve}]{roziere2024codellamaopenfoundation}
Baptiste Rozière, Jonas Gehring, Fabian Gloeckle, Sten Sootla, Itai Gat,
  Xiaoqing~Ellen Tan, Yossi Adi, Jingyu Liu, Romain Sauvestre, Tal Remez,
  Jérémy Rapin, Artyom Kozhevnikov, Ivan Evtimov, Joanna Bitton, Manish
  Bhatt, Cristian~Canton Ferrer, Aaron Grattafiori, Wenhan Xiong, Alexandre
  Défossez, Jade Copet, Faisal Azhar, Hugo Touvron, Louis Martin, Nicolas
  Usunier, Thomas Scialom, and Gabriel Synnaeve. 2024.
\newblock Code llama: Open foundation models for code.
\newblock \emph{arXiv:2308.12950}.

\bibitem[{Ryu and Kim(2024)}]{ryu2024closerlook-survey}
Hyun Ryu and Eric Kim. 2024.
\newblock Closer look at efficient inference methods: A survey of speculative
  decoding.
\newblock \emph{arXiv:2411.13157}.

\bibitem[{Sheng et~al.(2024)Sheng, Cao, Li, Zhu, Li, Zhuo, Gonzalez, and
  Stoica}]{sheng2024fairness}
Ying Sheng, Shiyi Cao, Dacheng Li, Banghua Zhu, Zhuohan Li, Danyang Zhuo,
  Joseph~E. Gonzalez, and Ion Stoica. 2024.
\newblock Fairness in serving large language models.
\newblock In \emph{Proc. OSDI}.

\bibitem[{Spector and Re(2023)}]{spector2023stagedSD}
Benjamin Spector and Chris Re. 2023.
\newblock Accelerating llm inference with staged speculative decoding.
\newblock \emph{arXiv:2308.04623}.

\bibitem[{Wang et~al.(2024{\natexlab{a}})Wang, Su, Li, Xia, Ye, Duan, Wang, and
  Zhang}]{wang2024opttreespeculativedecodingadaptive}
Jikai Wang, Yi~Su, Juntao Li, Qingrong Xia, Zi~Ye, Xinyu Duan, Zhefeng Wang,
  and Min Zhang. 2024{\natexlab{a}}.
\newblock Opt-tree: Speculative decoding with adaptive draft tree structure.
\newblock \emph{arXiv:2406.17276}.

\bibitem[{Wang et~al.(2024{\natexlab{b}})Wang, Yang, Wang, Liu, Wang, Liang,
  Ma, Feng, You, Bao, Liu, Luan, and Qian}]{wang2024minions}
Siqi Wang, Hailong Yang, Xuezhu Wang, Tongxuan Liu, Pengbo Wang, Xuning Liang,
  Kejie Ma, Tianyu Feng, Xin You, Yongjun Bao, Yi~Liu, Zhongzhi Luan, and Depei
  Qian. 2024{\natexlab{b}}.
\newblock {Minions}: Accelerating large language model inference with
  aggregated speculative execution.
\newblock \emph{arXiv:2402.15678}.

\bibitem[{Wieder et~al.(2011)Wieder, Butler, Theilmann, and
  Yahyapour}]{wieder2011service}
P.~Wieder, J.M. Butler, W.~Theilmann, and R.~Yahyapour. 2011.
\newblock \emph{Service Level Agreements for Cloud Computing}.
\newblock SpringerLink : B{\"u}cher. Springer New York.

\bibitem[{Xia et~al.(2024)Xia, Yang, Dong, Wang, Li, Ge, Liu, Li, and
  Sui}]{xia2024sd-survey}
Heming Xia, Zhe Yang, Qingxiu Dong, Peiyi Wang, Yongqi Li, Tao Ge, Tianyu Liu,
  Wenjie Li, and Zhifang Sui. 2024.
\newblock Unlocking efficiency in large language model inference: A
  comprehensive survey of speculative decoding.
\newblock In \emph{Proc. ACL Findings}, pages 7655--7671.

\bibitem[{Xu et~al.(2024)Xu, Wu, Qiao, Verma, Shu, Wang, Niu, He, Chen, Zhou,
  Lau, Dao, Agussurja, Sim, Lin, Hu, Dai, Koh, and
  Low}]{xu2024datacentricaiagelarge}
Xinyi Xu, Zhaoxuan Wu, Rui Qiao, Arun Verma, Yao Shu, Jingtan Wang, Xinyuan
  Niu, Zhenfeng He, Jiangwei Chen, Zijian Zhou, Gregory Kang~Ruey Lau, Hieu
  Dao, Lucas Agussurja, Rachael Hwee~Ling Sim, Xiaoqiang Lin, Wenyang Hu,
  Zhongxiang Dai, Pang~Wei Koh, and Bryan Kian~Hsiang Low. 2024.
\newblock Data-centric {AI} in the age of large language models.
\newblock In \emph{Proc. EMNLP Findings}.

\bibitem[{YAV-AI(2024)}]{yav-ai2024domain-tough}
YAV-AI. 2024.
\newblock Llm-tough-questions dataset: A collection of complex questions across
  100 domains.
\newblock
  \url{https://huggingface.co/datasets/YAV-AI/llm-domain-specific-tough-questions}.

\bibitem[{Yin et~al.(2024)Yin, Fu, Zhao, Li, Sun, Xu, and Chen}]{Yin2024mllm}
Shukang Yin, Chaoyou Fu, Sirui Zhao, Ke~Li, Xing Sun, Tong Xu, and Enhong Chen.
  2024.
\newblock A survey on multimodal large language models.
\newblock \emph{National Science Review}, 11(12).

\bibitem[{Zeng et~al.(2023)Zeng, Gan, Wang, Liu, and
  Yu}]{zeng2023largelanguagemodelsrobotics}
Fanlong Zeng, Wensheng Gan, Yongheng Wang, Ning Liu, and Philip~S. Yu. 2023.
\newblock Large language models for robotics: A survey.
\newblock \emph{arXiv:2311.07226}.

\bibitem[{Zhang et~al.(2024)Zhang, Wang, Li, Shou, Chen, Chen, and
  Mehrotra}]{zhang2024skiplayer}
Jun Zhang, Jue Wang, Huan Li, Lidan Shou, Ke~Chen, Gang Chen, and Sharad
  Mehrotra. 2024.
\newblock Draft{\&} verify: Lossless large language model acceleration via
  self-speculative decoding.
\newblock In \emph{Proc. ACL}, pages 11263--11282.

\bibitem[{Zhao et~al.(2023)Zhao, Zhou, Li, Tang, Wang, Hou, Min, Zhang, Zhang,
  Dong, Du, Yang, Chen, Chen, Jiang, Ren, Li, Tang, Liu, Liu, Nie, and
  Wen}]{zhao2023survey}
Wayne~Xin Zhao, Kun Zhou, Junyi Li, Tianyi Tang, Xiaolei Wang, Yupeng Hou,
  Yingqian Min, Beichen Zhang, Junjie Zhang, Zican Dong, Yifan Du, Chen Yang,
  Yushuo Chen, Zhipeng Chen, Jinhao Jiang, Ruiyang Ren, Yifan Li, Xinyu Tang,
  Zikang Liu, Peiyu Liu, Jian-Yun Nie, and Ji-Rong Wen. 2023.
\newblock A survey of large language models.
\newblock \emph{arXiv:2303.18223}.

\bibitem[{Zheng et~al.(2023)Zheng, Chiang, Sheng, Zhuang, Wu, Zhuang, Lin, Li,
  Li, Xing, Zhang, Gonzalez, and Stoica}]{zheng2023arena}
Lianmin Zheng, Wei-Lin Chiang, Ying Sheng, Siyuan Zhuang, Zhanghao Wu, Yonghao
  Zhuang, Zi~Lin, Zhuohan Li, Dacheng Li, Eric Xing, Hao Zhang, Joseph~E.
  Gonzalez, and Ion Stoica. 2023.
\newblock Judging llm-as-a-judge with mt-bench and chatbot arena.
\newblock In \emph{Proc. NeurIPS (Datasets and Benchmarks Track)}, pages
  46595--46623.

\bibitem[{Zheng et~al.(2024)Zheng, Yin, Xie, Sun, Huang, Yu, Cao, Kozyrakis,
  Stoica, Gonzalez, Barrett, and
  Sheng}]{zheng2024sglangefficientexecutionstructured}
Lianmin Zheng, Liangsheng Yin, Zhiqiang Xie, Chuyue Sun, Jeff Huang, Cody~Hao
  Yu, Shiyi Cao, Christos Kozyrakis, Ion Stoica, Joseph~E. Gonzalez, Clark
  Barrett, and Ying Sheng. 2024.
\newblock Sglang: Efficient execution of structured language model programs.
\newblock \emph{arXiv:2312.07104}.

\bibitem[{Zhou et~al.(2024{\natexlab{a}})Zhou, Lyu, Rawat, Menon, Rostamizadeh,
  Kumar, Kagy, and Agarwal}]{zhou2024distillspec}
Yongchao Zhou, Kaifeng Lyu, Ankit~Singh Rawat, Aditya~Krishna Menon, Afshin
  Rostamizadeh, Sanjiv Kumar, Jean-Fran{\c{c}}ois Kagy, and Rishabh Agarwal.
  2024{\natexlab{a}}.
\newblock Distillspec: Improving speculative decoding via knowledge
  distillation.
\newblock In \emph{Proc. ICLR}.

\bibitem[{Zhou et~al.(2024{\natexlab{b}})Zhou, Lin, Xu, Prakash, Rus, and
  Low}]{zhou2024detail}
Zijian Zhou, Xiaoqiang Lin, Xinyi Xu, Alok Prakash, Daniela Rus, and Bryan
  Kian~Hsiang Low. 2024{\natexlab{b}}.
\newblock Detail: Task demonstration attribution for interpretable in-context
  learning.
\newblock In \emph{Proc. NeurIPS}.

\bibitem[{Zhou et~al.(2023)Zhou, Xu, Sim, Foo, and
  Low}]{zhou2022probablyapproximateshapleyfairness}
Zijian Zhou, Xinyi Xu, Rachael Hwee~Ling Sim, Chuan~Sheng Foo, and Kian~Hsiang
  Low. 2023.
\newblock Probably approximate shapley fairness with applications in machine
  learning.
\newblock In \emph{Proc. AAAI}.

\end{thebibliography}

    \appendix
    \onecolumn
    \label{sec:appendix}

\section{Leftover Proofs}\label{app:proofs}

\subsection{Proof of \texorpdfstring{\cref{thm:tetris}}{Theorem 1}.}
\label{app:proof_of_local_tetris}

\tetris*
\begin{proof}
    We prove it by contradiction. Let the selection of \cref{alg:tetris} be $\gD^*$. Suppose the actual optimal solution is $\gD' \neq \gD^*$. Let $\tilde{\gD} = \gD' \cap \gD^*$ be the overlapping tokens selected by both \cref{alg:tetris} and the actual optimal solution. Note that the tokens in each row are selected sequentially (i.e., tokens cannot be skipped in a row). 

    \paragraph{Case 1: \alg{} selects some token $d \in \gD^* \setminus \tilde{\gD}$ before selecting $\tilde{\gD}$.} In this case, the $\mathbb{E}[\vone]$ of the token $d$ is higher than the token last selected in $\tilde{\gD}$. This suggests that the optimal selection should include $d$. However, it can be observed that $d \notin D'$ since otherwise $d \in \tilde{\gD}$. This contradicts the fact that $\gD'$ is optimal.

    \paragraph{Case 2: \alg{} selects $\tilde{\gD}$ first before selecting other tokens.} Since \cref{alg:tetris} always selects the token with the highest $\mathbb{E}[\vone]$, every element in $\gD^* \setminus \tilde{\gD}$ is larger than or equal to that in $\gD' \setminus \tilde{\gD}$. As such, we have 
    $\mathbb{E}[\sum_{p \in \gD'} \mathbf{1}_p] \leq \mathbb{E}[\sum_{p \in \gD^*} \mathbf{1}_p]$. However, this contradicts the fact that $\gD'$ is optimal as \cref{alg:tetris} has a higher number of accepted tokens. Therefore, \cref{alg:tetris} must be optimal. 
    
    Combining the two cases finishes the proof. \qedhere
\end{proof}

\subsection{Running Time of \alg{}}
\label{app:lem_time_comp}

\begin{lem}
\label{lem:time_comp}
\cref{alg:tetris} achieves a time complexity of $\gO(C\log N)$.

\begin{proof}
    Note that \cref{alg:tetris} maintains a heap. The heap is initialized with $N$ items. Since only $C$ pairs are selected, there are $2C$ operations of \texttt{enqueue} and \texttt{dequeue}. Following classic results of heap operation, each \texttt{enqueue} of \texttt{dequeue} operation requires $\gO(\log C)$ time. As such, the overall time complexity of \alg{} is $\gO(C\log N)$.
\end{proof}
\end{lem}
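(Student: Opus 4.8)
The plan is to directly account for the heap operations performed by \cref{alg:tetris} and to bound the per-operation cost by controlling the maximum size of the heap $\gH$ throughout execution. First I would enumerate the operations. The initialization loop over $i \in [N]$ performs $N$ insertions of the tuples $\gB_{i,0}$. The main \texttt{repeat} loop then executes exactly $C$ times: it adds exactly one pair $(i,j)$ to $\gD^*$ per iteration and terminates precisely when $|\gD^*| = C$, and each iteration performs one \texttt{extractMax} and one insertion of $\gB_{i,j+1}$. All remaining per-iteration work—adding $(i,j)$ to $\gD^*$ and updating $Z[i]$—is $\gO(1)$. Hence the total number of heap operations is $N + 2C$, and the total non-heap work is $\gO(N + C)$.

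The crucial observation is the bound on the heap size. After initialization, $\gH$ holds exactly $N$ tuples, one frontier per request. Because each main-loop iteration removes exactly one element (\texttt{extractMax}) and inserts exactly one (the advanced frontier $\gB_{i,j+1}$), the size is restored to $N$ at the end of every iteration and never exceeds $N$ at any point. By the classical worst-case analysis of binary heaps, an insertion or \texttt{extractMax} on a heap of size at most $N$ costs $\gO(\log N)$. Multiplying the operation count by this per-operation cost gives a running time of $\gO((N + 2C)\log N)$.

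Finally I would simplify using the capacity constraint. Since $\gD$ is constructed with $|\gD| = \sum_{i=1}^N k_i = C$ where each draft window satisfies $k_i \geq 1$, we have $C \geq N$, so $N + 2C \leq 3C = \gO(C)$. Substituting into the previous bound yields the claimed $\gO(C \log N)$.

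The main obstacle is the heap-size bound: a naive count would treat $\gH$ as growing to as many as $N + C$ elements, inflating the per-operation cost to $\gO(\log(N+C))$. The key is recognizing that the one-in/one-out structure of the main loop pins the heap size at $N$, and then invoking $C \geq N$ to collapse $N + 2C$ into $\gO(C)$; the remainder is routine bookkeeping.
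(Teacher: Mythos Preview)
Your proof is correct and follows essentially the same approach as the paper: count the heap operations and multiply by the per-operation cost. In fact, your version is tighter than the paper's own argument---you make explicit the one-in/one-out invariant that pins $|\gH|$ at $N$, and you justify the collapse of $N+2C$ to $\gO(C)$ via $C\geq N$, both of which the paper's proof leaves implicit (it even writes $\gO(\log C)$ for the per-operation cost before concluding $\gO(C\log N)$, which only lines up under the $C\geq N$ regime you spell out).
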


\subsection{Proof of \texorpdfstring{\cref{thm:globalOpt}}{Theorem 2}.}
\label{app:thm_global_opt_tetris}

\globalOpt*
\begin{proof}
    The proof of global optimality is established on \cref{thm:tetris}. Since all tokens in each row have the same acceptance rate. After each step, we have the same distribution of $\vone$ no matter what tokens are accepted, where $\vone$ is the indicator variable of whether the token is accepted.
    As such, at each step, performing \alg{} is per-step optimal by \cref{thm:tetris}. Moreover, since the state at each step is identical, a per-step optimal strategy is also globally optimal. \qedhere
\end{proof}

\section{Additional Related Work and Discussion}

\subsection{Acceptance Rate}\label{app:acceptance-related-work}

The acceptance rate plays a vital role in the effectiveness of speculative decoding.
A higher acceptance rate should be paired with a larger draft window size $k$ to achieve optimal speedup.
In the typical rejection sampling setting of speculative decoding, the acceptance of draft tokens depends on the probability distributions of both the draft and target models.
When the probability distribution of the draft model, $p_\gS(\cdot)$, closely approximates that of the target model, $p_\gM(\cdot)$, a higher number of tokens are accepted on average.
Since the value of $k$ is chosen in the drafting process, we do not have access to $p_\gM(\cdot)$ and have to rely on $p_\gS(\cdot)$ to estimate the acceptance rate.

\citet{leviathan2023} derive that the expected acceptance rate is 1 minus the KL divergence between the token distributions of the draft and the target model.
Hence, the acceptance rates for all draft tokens are considered constant.
\citet{liu2024optimizingspeculativedecodingserving} assume uniform token acceptance behavior across diverse requests.
It proposes SmartSpec, which calculates the average acceptance rate from past generation steps.
\citet{li2024eagle2} and~\citet{wang2024opttreespeculativedecodingadaptive} utilize the draft model's confidence score (i.e., the output probability of each token) to estimate the acceptance rate.
\citet{chen2024sequoia} make the positional acceptance assumption so that the acceptance rate of tokens is determined solely by their position (i.e., number of tokens away) relative to the already accepted tokens.
\citet{agrawal2024adaedlearlydraftstopping} instead consider an approximate lower bound on the expected acceptance rate of a token that depends on the entropy of prediction probabilities of the draft model $p_\gS(\cdot)$.
Noting the acceptability of diverse tokens, especially in the real world with a high value of temperature hyperparameter, Medusa proposes to use both a hard threshold and an entropy-dependent threshold as a criterion to accept draft tokens~\citep{cai2024medusa}.
In Medusa, the first token is always accepted using greedy decoding to ensure at least one token is generated in each step.

\subsection{Greedy Algorithms for Speculative Decoding}\label{app:related-work-greedy}

Several existing approaches in speculative decoding employ greedy algorithms to improve efficiency, though they operate at different levels and with distinct objectives. EAGLE-2~\citep{li2024eagle2} and MDSD~\citep{hu2025mdsd}, for instance, utilize greedy selection strategies within single-request, multi-draft settings. EAGLE-2 achieves speedups by generating multiple branches in a draft tree, which the top candidates are then greedily selected for verification. It requires substantial GPU resources for each request and may not translate to improved throughput in batch inference due to limited total GPU capacity. MDSD, on the other hand, focuses on increasing the acceptance rate in multi-draft speculative decoding through greedy draft sampling, an objective orthogonal to our work.
Our \alg{} distinguishes itself by applying a greedy algorithm at the batch level, optimizing GPU resource utilization across multiple requests rather than within a single draft tree. Due to the simplicity of greedy approaches in implementation, our method can be readily adapted to major inference frameworks (e.g., \vllm{}~\citep{kwon2023vllm}) with minimal empirical overhead. Our paper also demonstrates that \alg{} is complementary to existing speculative decoding frameworks, such as Medusa~\citep{cai2024medusa} (see~\cref{sec:ablation-study} and~\cref{app:medusa-extension}), by introducing a novel axis of optimization not explored by prior works.

\subsection{Assumptions for Analysis}\label{app:related-work-assumption}

Our theoretical analysis of the global optimality of \alg{} in~\cref{thm:globalOpt} is conditioned on~\cref{assu:equal_acc}.
Similar assumptions are commonly made in the literature~\citep{leviathan2023,liu2024optimizingspeculativedecodingserving} to analyze the length of generated outputs. 
Specifically, the pioneering work of \citet{leviathan2023} assumes a constant acceptance rate $\alpha$ and applies a capped geometric distribution to derive the expected number of generated tokens, under a single-request setting.
For a single request, this assumption is equivalent to our~\cref{assu:equal_acc}.
A natural extension of the constant acceptance rate assumption to the batch inference setting would be assuming the identical acceptance rate across all requests. 
However, we highlight that our~\cref{assu:equal_acc} is weaker than assuming the same constant acceptance rate across all requests, as it allows the acceptance rate to vary across requests.
Moreover, even when~\cref{assu:equal_acc} is violated, our method still demonstrates strong empirical performance as shown in~\cref{sec:experiment} of the main paper.

\begin{figure}[b]
    \centering
    \setlength{\tabcolsep}{1pt} 
    \begin{tabular}{ccc}
        \includegraphics[width=0.31\linewidth]{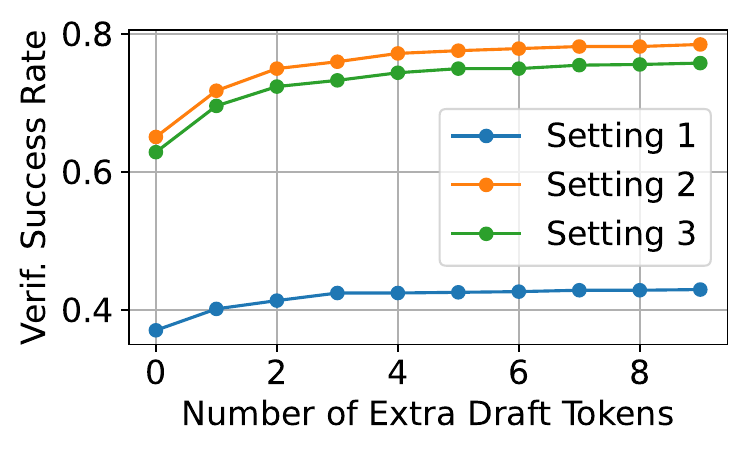} &
        \includegraphics[width=0.31\linewidth]{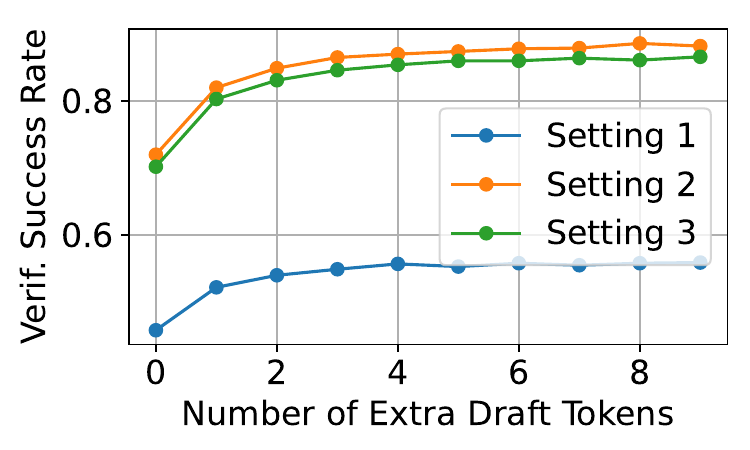} &
        \includegraphics[width=0.31\linewidth]{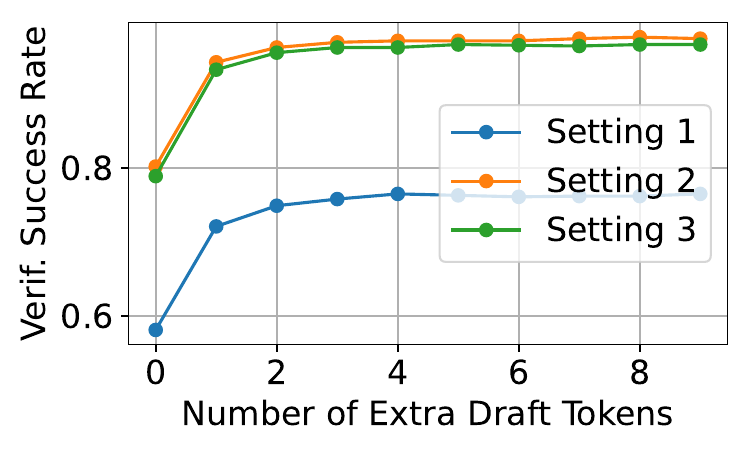} \\
        (a) Base $k=3$ & (b) Base $k=2$ & (c) Base $k=1$ \\
    \end{tabular}
    \caption{Additional plots for the change in VSR as the number of \textit{extra draft tokens} increases, supplementary to~\cref{fig:extra_params} where base draft length $k=4$. Here, we provide results for $k=1,2,3$.}
    \label{fig:extra_params_additional_k123}
\end{figure}

\section{Additional Results}

\subsection{Dataset License}\label{app:data-license}
ShareGPT~\citep{sharegpt-dataset}: Apache license 2.0; Arena~\citep{zheng2023arena}: CC; Domain-specific Tough Questions~\citep{yav-ai2024domain-tough}: MIT.

\subsection{Additional Plots for Effect of Extra Draft Tokens}\label{app:add-plots-extra-params}

\cref{fig:extra_params} demonstrates the benefit of adding extra draft tokens (from 0 to 9 extra tokens).
The base draft length $k$ serves only as a reference baseline and its value does not significantly impact the observed trends in~\cref{fig:extra_params}.
We set $k=4$ based on established reasonable ranges used in existing studies~\citep{leviathan2023,zhang2024skiplayer,wang2024minions}.
This choice balances the trade-off between extra compute required and additional throughput gained.

For completeness, we conducted additional experiments with $k=1,2,3$ and present the results in~\cref{fig:extra_params_additional_k123}. 
The observed trends are consistent across different values of $k$: VSR generally increases with extra tokens, demonstrating the effectiveness of having extra draft tokens in our \alg{}.

\begin{figure}[t]
    \centering
    \setlength{\tabcolsep}{1pt} 
    \begin{tabular}{cccc}
    & \hspace{8mm}\textbf{ShareGPT} & \hspace{8mm}\textbf{Arena} & \hspace{8mm}\textbf{Tough} \\
    \rotatebox{90}{\parbox{2.5cm}{\centering \hspace{8mm}\textbf{Setting 1}}} & \includegraphics[width=0.31\linewidth]{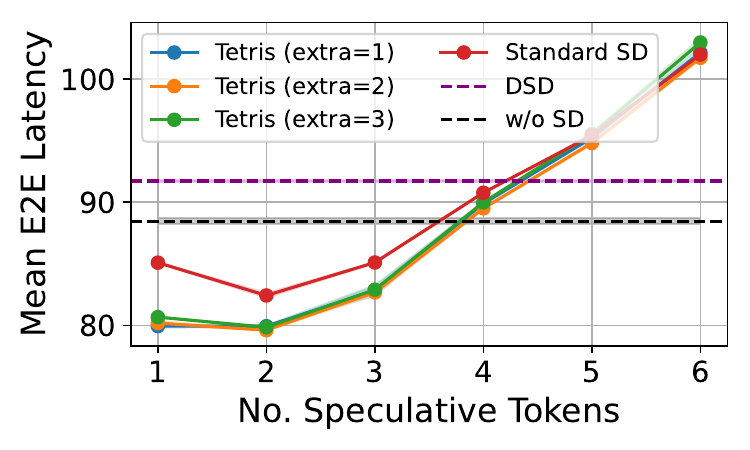} &
    \includegraphics[width=0.31\linewidth]{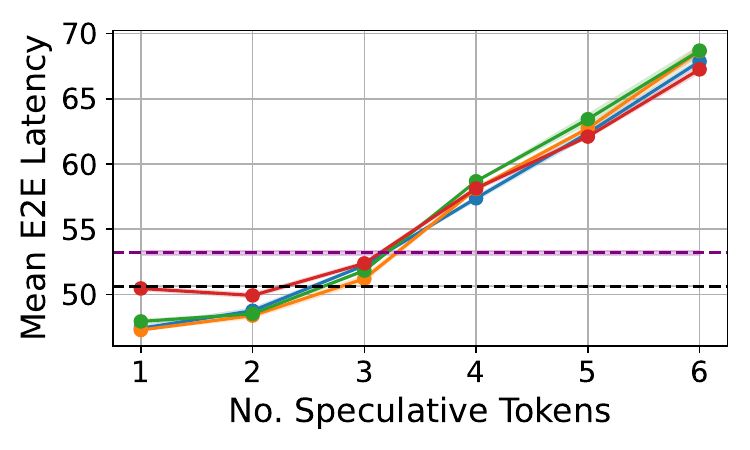} &
    \includegraphics[width=0.31\linewidth]{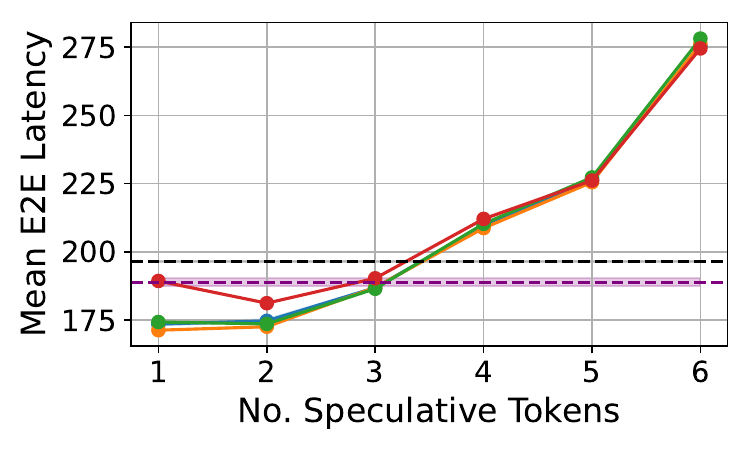} \\
    & (a) \hspace{5mm} $\uparrow$ 3.42\%, $\Delta$ 6.05\% & (b) \hspace{5mm} $\uparrow$ 5.30\%, $\Delta$ 6.30\% & (c) \hspace{5mm} $\uparrow$ 5.47\%, $\Delta$ 9.32\% \\
    
    \rotatebox{90}{\parbox{2.5cm}{\centering \hspace{8mm}\textbf{Setting 2}}}  & \includegraphics[width=0.31\linewidth]{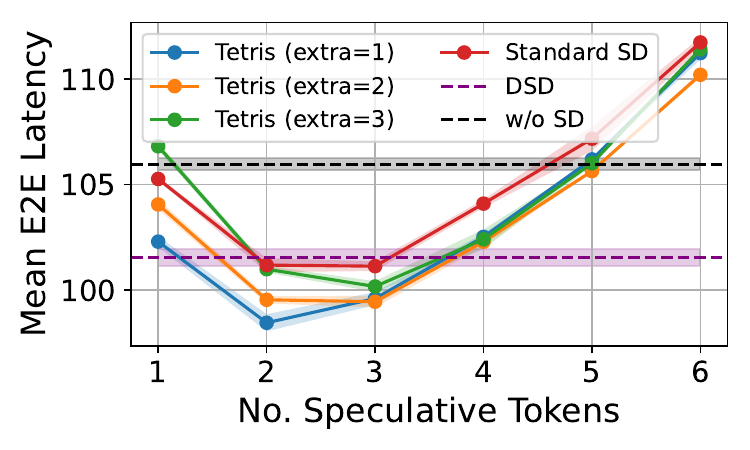} & \includegraphics[width=0.31\linewidth]{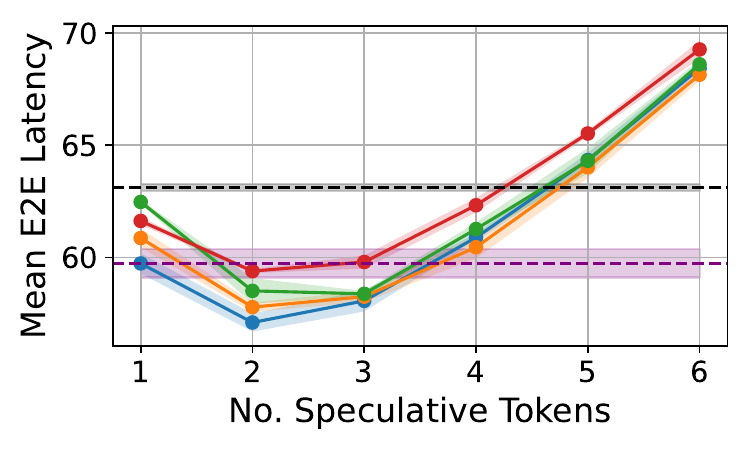} & \includegraphics[width=0.31\linewidth]{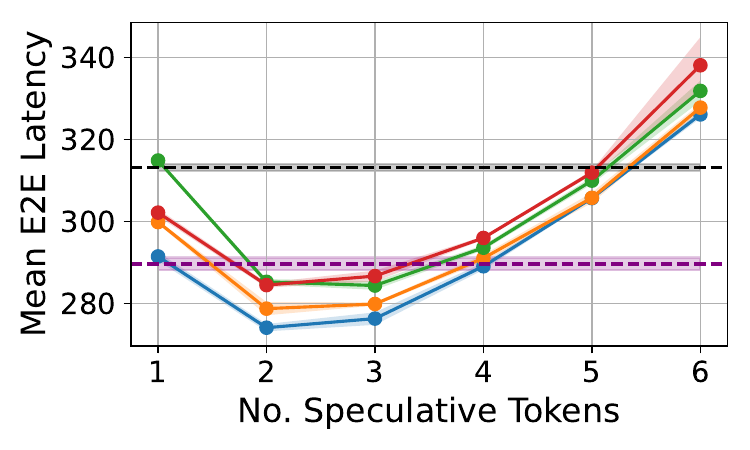}  \\
    & (d) \hspace{5mm} $\uparrow$ 2.65\%, $\Delta$ 2.70\% & (e) \hspace{5mm} $\uparrow$ 3.86\%, $\Delta$ 3.86\% & (f) \hspace{5mm} $\uparrow$ 3.65\%, $\Delta$ 3.65\% \\
    
    \rotatebox{90}{\parbox{2.5cm}{\centering \hspace{8mm}\textbf{Setting 3}}}  & \includegraphics[width=0.31\linewidth]{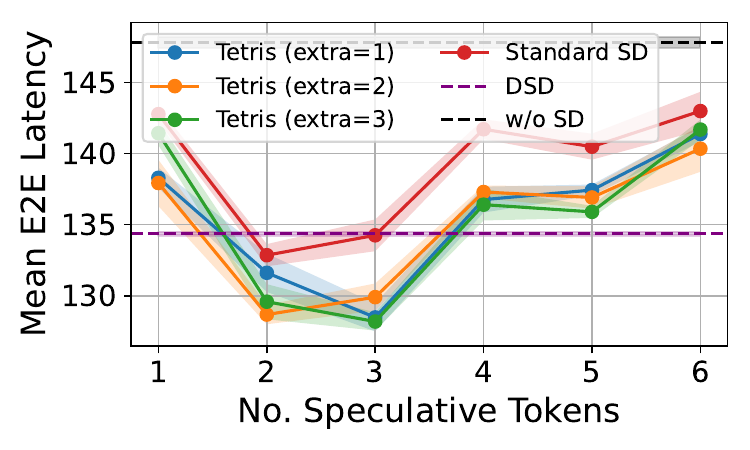} & 
    \includegraphics[width=0.31\linewidth]{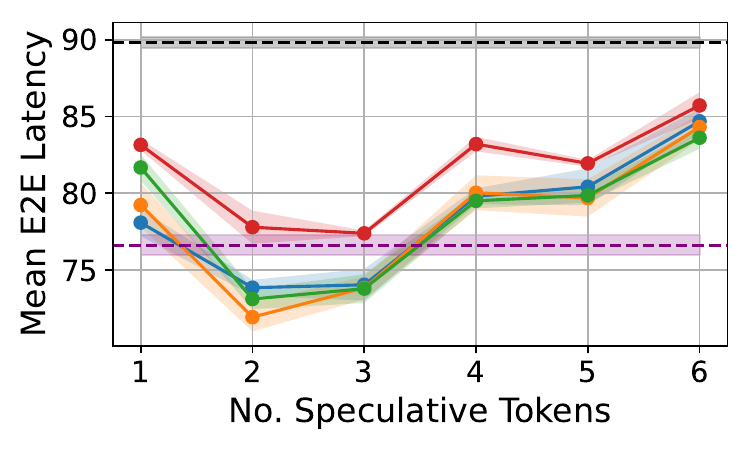} &
    \includegraphics[width=0.31\linewidth]{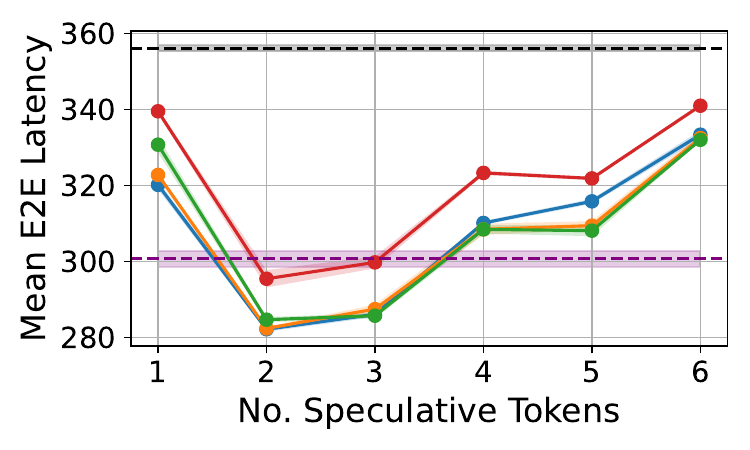} \\
    & (g) \hspace{5mm} $\uparrow$ 3.51\%, $\Delta$ 4.52\% & (h) \hspace{5mm} $\uparrow$ 6.13\%, $\Delta$ 6.13\% & (i) \hspace{5mm} $\uparrow$ 4.49\%, $\Delta$ 4.68\% \\
\end{tabular}
    \caption{Mean end-to-end latency comparison for various methods across experimental settings. $\uparrow$ indicates the improvement from best baseline method. $\Delta$ indicates the maximum gap between \alg{} and standard SD. The reported numbers reflect the mean and standard deviation over 3 independent trials.}
    \label{fig:latency-all-appendix}
\end{figure}

\subsection{Plots for End-to-end Latency}\label{app:latency-plots}

\begin{figure}[t]
    \centering
    \setlength{\tabcolsep}{1pt} 
    \begin{tabular}{cccc}
    & \hspace{8mm}\textbf{ShareGPT} & \hspace{8mm}\textbf{Arena} & \hspace{8mm}\textbf{Tough} \\
    \rotatebox{90}{\parbox{2.5cm}{\centering \hspace{8mm}\textbf{Setting 1}}} & \includegraphics[width=0.31\linewidth]{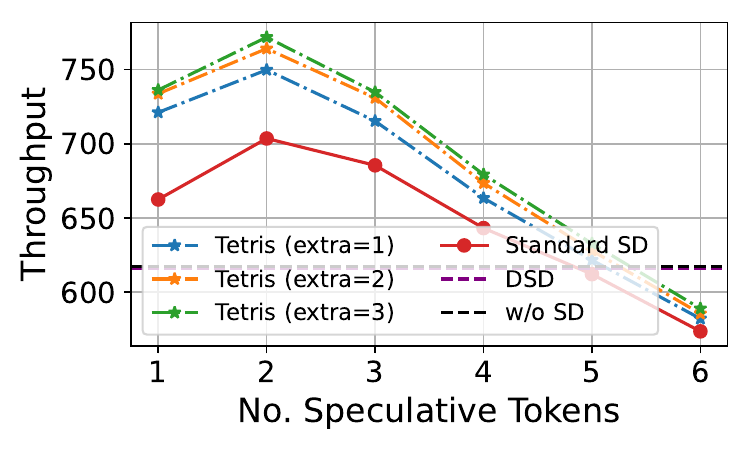} &
    \includegraphics[width=0.31\linewidth]{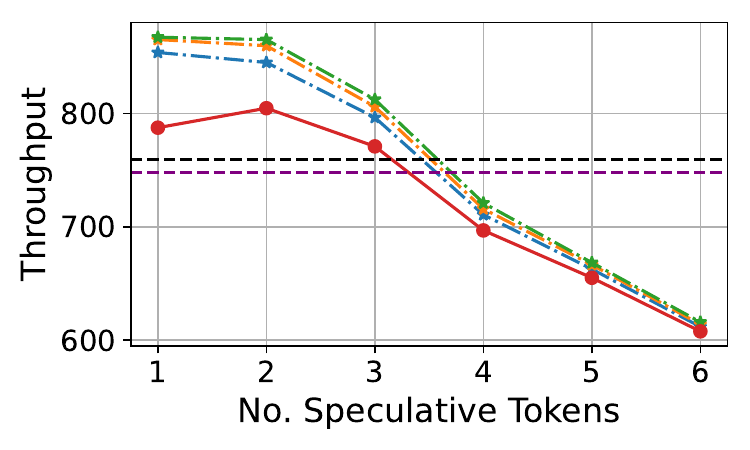} &
    \includegraphics[width=0.31\linewidth]{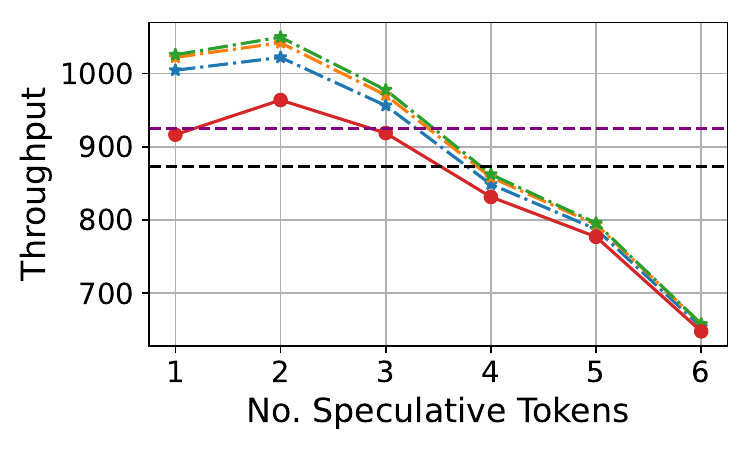} \\
    & (a) \hspace{5mm} $\hat{\gG}^{(\textit{TER})}$$\uparrow$ 9.70\% & (b) \hspace{5mm} $\hat{\gG}^{(\textit{TER})}$$\uparrow$ 7.79\% & (c) \hspace{5mm} $\hat{\gG}^{(\textit{TER})}$$\uparrow$ 8.92\% \\
    
    \rotatebox{90}{\parbox{2.5cm}{\centering \hspace{8mm}\textbf{Setting 2}}}  & \includegraphics[width=0.31\linewidth]{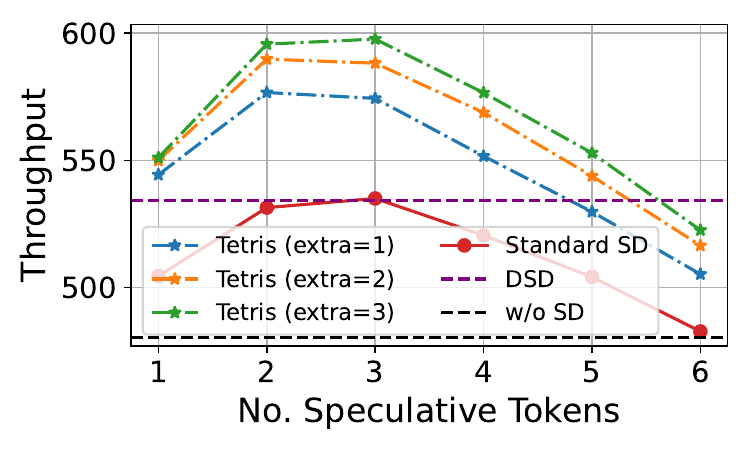} & \includegraphics[width=0.31\linewidth]{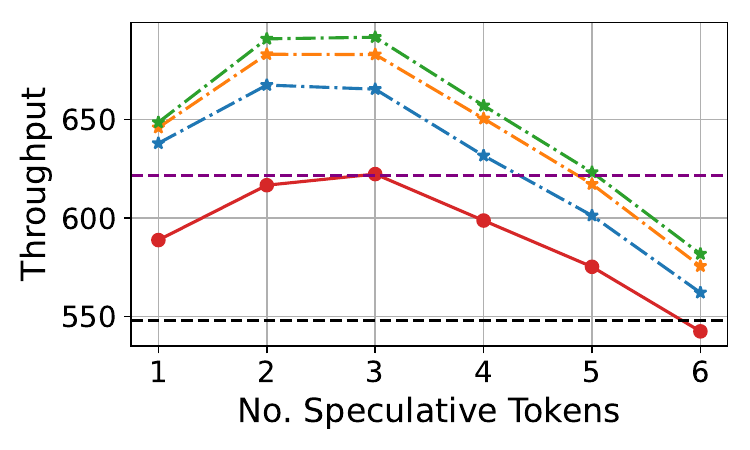} & \includegraphics[width=0.31\linewidth]{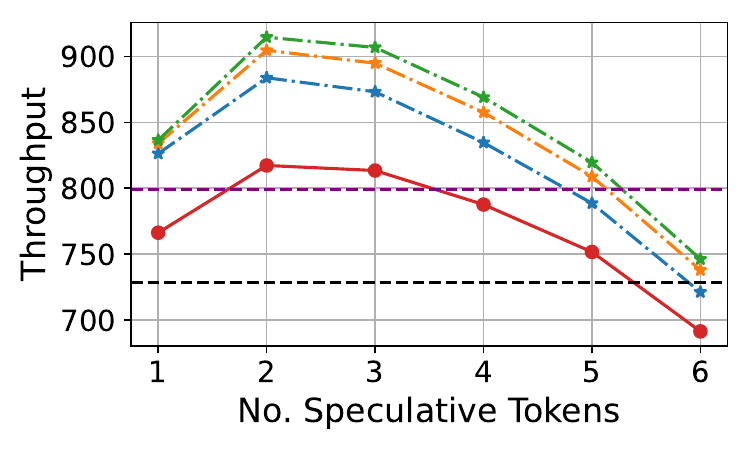}  \\
    & (d) \hspace{5mm} $\hat{\gG}^{(\textit{TER})}$$\uparrow$ 11.70\% & (e) \hspace{5mm} $\hat{\gG}^{(\textit{TER})}$$\uparrow$ 11.17\% & (f) \hspace{5mm} $\hat{\gG}^{(\textit{TER})}$$\uparrow$ 11.91\% \\
    
    \rotatebox{90}{\parbox{2.5cm}{\centering \hspace{8mm}\textbf{Setting 3}}}  & \includegraphics[width=0.31\linewidth]{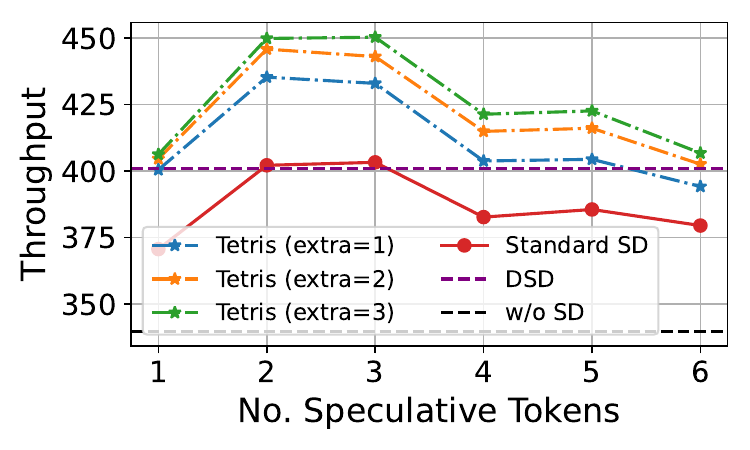} & 
    \includegraphics[width=0.31\linewidth]{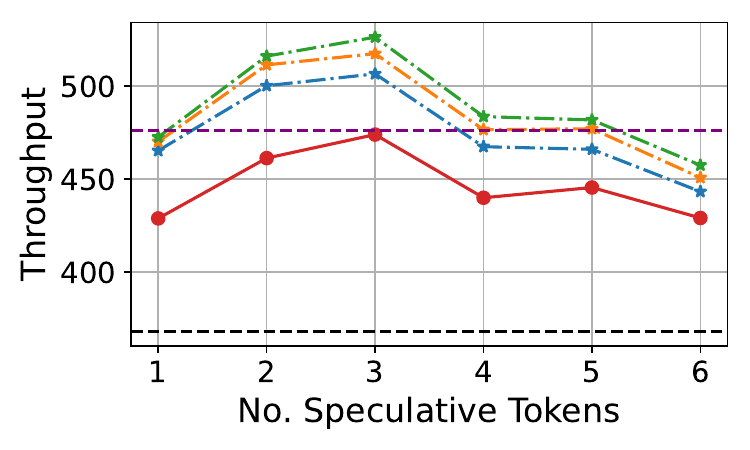} &
    \includegraphics[width=0.31\linewidth]{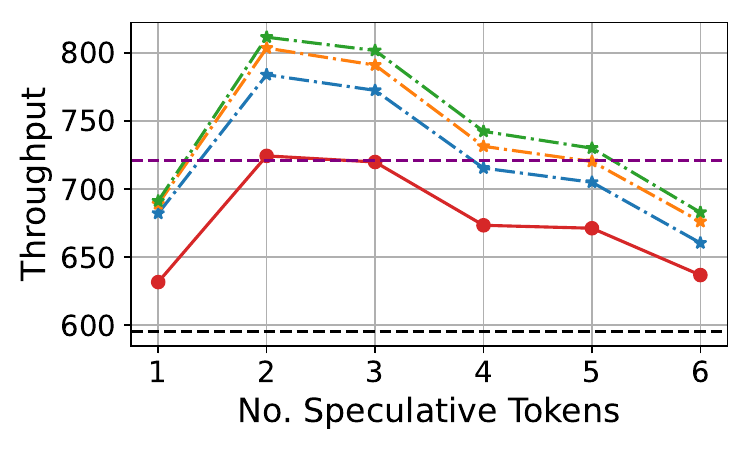} \\
    & (g) \hspace{5mm} $\hat{\gG}^{(\textit{TER})}$$\uparrow$ 11.67\% & (h) \hspace{5mm} $\hat{\gG}^{(\textit{TER})}$$\uparrow$ 10.53\% & (i) \hspace{5mm} $\hat{\gG}^{(\textit{TER})}$$\uparrow$ 12.04\% \\
\end{tabular}
    \caption{Mean projected throughput $\hat{\gG}^{(\textit{TER})}$ comparison for various methods across experimental settings. $\uparrow$ indicates the improvement from the best baseline method. The reported numbers reflect the mean over 3 independent trials.}
    \label{fig:projected-throughput-all-appendix}
\end{figure}

We provide an extended discussion on the improvement of end-of-end latency from~\cref{sec:evaluation}.
In~\cref{fig:latency-all-appendix}, we show the plots for the end-to-end latency over all speculative decoding configurations and settings used in the paper.
\alg{} consistently outperforms the existing baselines and achieves up to 6.13\% improvement over the best baseline and up to 9.32\% maximum gap over standard SD.
Therefore, \alg{} has demonstrated to effectively reduce end-to-end request latency, which is also essential for enhancing the user experience with LLM inference service providers.

\subsection{Plots for Projected Improvement based on TER}\label{app:projected-improvement-plots}

Complementary to~\cref{tab:potential}, which contains the numerical results for the projected improvement of \alg{} in terms of the projected throughput $\hat{\gG}^{(\textit{TER})}$, we also show the plots in~\cref{fig:projected-throughput-all-appendix} to visually illustrate the effectiveness of our method. 
The dotted lines for \alg{} (drawn in blue, orange, and green) represent the projected throughput calculated based on the throughput of the standard SD and also the \alg{}'s improvement in terms of target efficiency rate (TER, as defined in~\cref{eq:TER}).
We note that these improvement numbers are theoretically computed and are not yet realizable in empirical settings due to the lack of parallelized pipeline implementations of speculative decoding in \vllm{}.

\subsection{Extension to Medusa}\label{app:medusa-extension}

We evaluate the top-1 proposal version (i.e., only draft the most likely token for each position) of Medusa and its integration with \alg{}.
As the Medusa model outputs multiple subsequent tokens in a single forward pass,\footnote{We use a modified implementation of Medusa in \vllm{} to ensure a fixed forward pass time.} we leverage this feature to produce extra draft tokens for \alg.
We show the results in~\cref{tab:ablation-medusa}.
We achieved a throughput improvement of 3.19\% as compared to the baseline Medusa.
The development of such multi-token prediction models, including models like EAGLE~\citep{li2024eagle} and DeepSeek-V3~\citep{deepseekai2024deepseekv3technicalreport} presents further potential for \alg{} to achieve greater speedups. Other improvements in engineering, including using tree-decoding and using a larger target model also potentially further boost the speedup.

\begin{table}[!ht]
\centering
\caption{Mean total throughput ($\pm$ standard deviation) for the ablation study of \alg{} extension to Medusa over three independent trials. The integration of \alg{} with Medusa further improves the total throughput.}
\label{tab:ablation-medusa}
\resizebox{0.9\linewidth}{!}{
\begin{tabular}{c|cccc}
\toprule
\textbf{No. Speculative Tokens} &
\multirow{1}{*}{\textbf{\alg{} (extra=1)}} & \multicolumn{1}{c}{\textbf{\alg{} (extra=2)}} & \multicolumn{1}{c}{\textbf{\alg{} (extra=3)}} & \multicolumn{1}{c}{\textbf{Baseline Medusa}}  \\ 
\midrule
1 & 591.26$\pm$0.46 & 590.83$\pm$8.30 & 586.47$\pm$3.66 &  572.97$\pm$1.79 \\
2 & 571.05$\pm$0.80 & 568.82$\pm$6.52 & 571.95$\pm$1.06 & 563.94$\pm$2.95 \\
\midrule
\textit{Best} & 591.26 & 590.83 & 586.47 & 572.97\\
\bottomrule
\end{tabular}}
\end{table}

\subsection{Improvement in Verification Success Rate}\label{app:improve-VSR}

As an ablation study, we also illustrate the improvement of \alg{} in terms of VSR (as defined in \cref{eq:VSR}), which is an important measure of the effectiveness of speculative decoding.
We show in~\cref{fig:VSR-all-appendix} that the maximum gap between \alg{} and standard SD in terms of VSR is consistently above 20\% and reaching over 30\% in some instances.
This validates the significant effect of \alg{} in selecting draft tokens that are most likely to be accepted by the target model without exceeding the system capacity of the server.
However, it is worth noting that this improvement in VSR does not translate entirely to an increment in total throughput or a reduction in end-to-end latency.
This is because the throughput in practice also depends on the running time of the draft model (especially when the speculative decoding pipeline is sequential, as discussed in~\cref{sec:parallel-implementation}), and VSR does not account for the generation of the bonus token (which takes up a portion of the generated tokens).

\begin{figure}[t]
    \centering
    \setlength{\tabcolsep}{1pt} 
    \begin{tabular}{cccc}
    & \hspace{8mm}\textbf{ShareGPT} & \hspace{8mm}\textbf{Arena} & \hspace{8mm}\textbf{Tough} \\
    \rotatebox{90}{\parbox{2.5cm}{\centering \hspace{8mm}\textbf{Setting 1}}} & \includegraphics[width=0.31\linewidth]{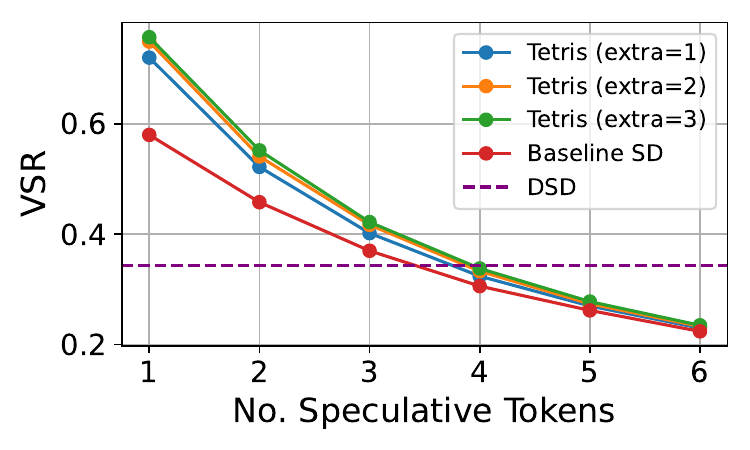} &
    \includegraphics[width=0.31\linewidth]{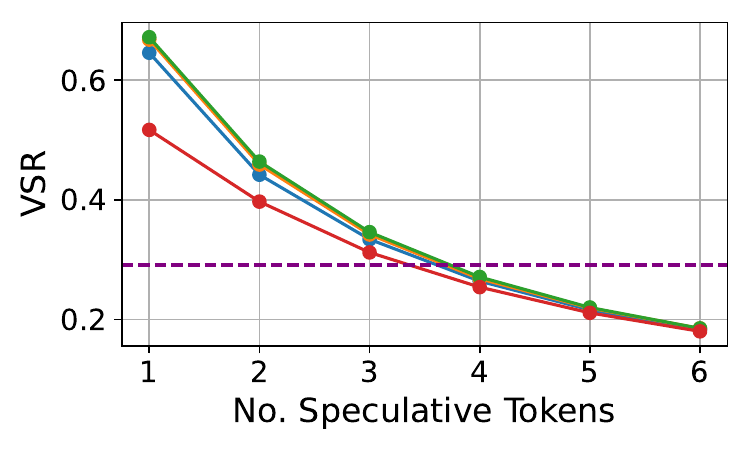} &
    \includegraphics[width=0.31\linewidth]{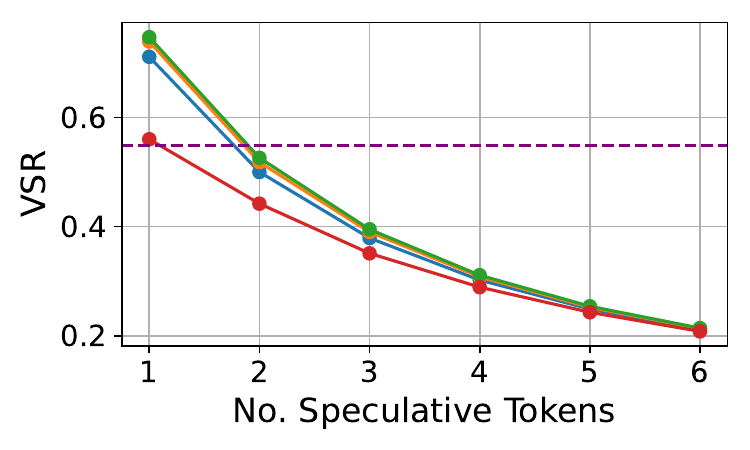} \\
    & (a) \hspace{5mm} $\Delta$ 30.52\% & (b) \hspace{5mm} $\Delta$ 29.98\% & (c) \hspace{5mm} $\Delta$ 33.39\% \\
    
    \rotatebox{90}{\parbox{2.5cm}{\centering \hspace{8mm}\textbf{Setting 2}}}  & \includegraphics[width=0.31\linewidth]{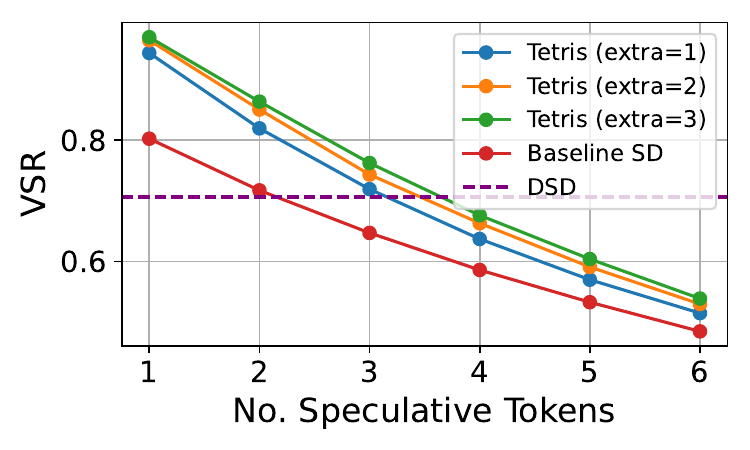} & \includegraphics[width=0.31\linewidth]{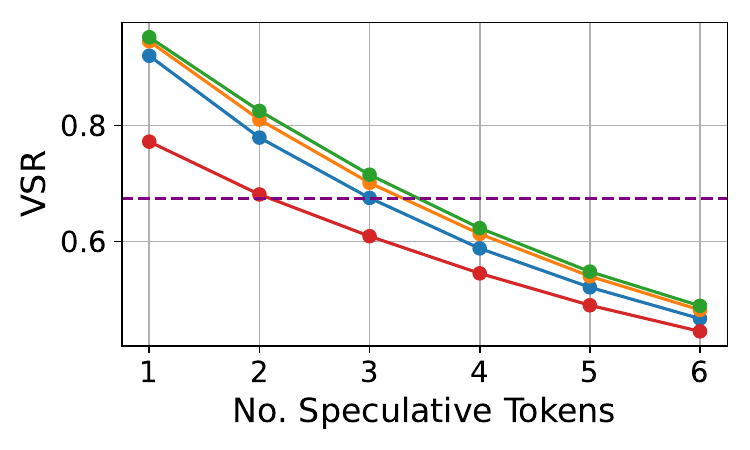} & \includegraphics[width=0.31\linewidth]{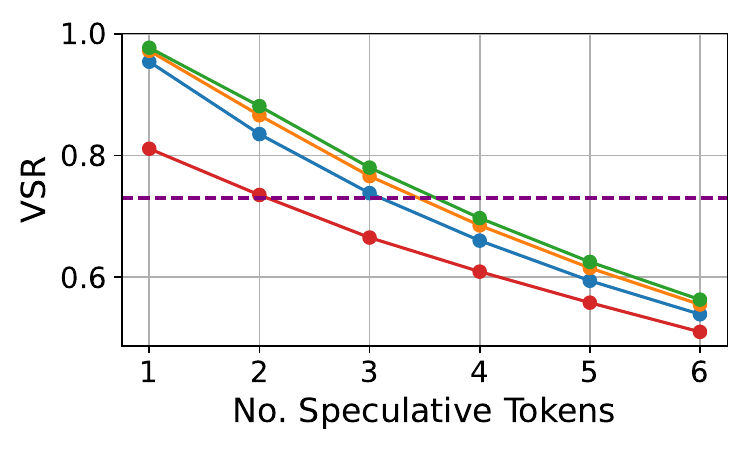}  \\
    & (d) \hspace{5mm} $\Delta$ 20.82\% & (e) \hspace{5mm} $\Delta$ 23.32\% & (f) \hspace{5mm} $\Delta$ 20.47\% \\
    
    \rotatebox{90}{\parbox{2.5cm}{\centering \hspace{8mm}\textbf{Setting 3}}}  & \includegraphics[width=0.31\linewidth]{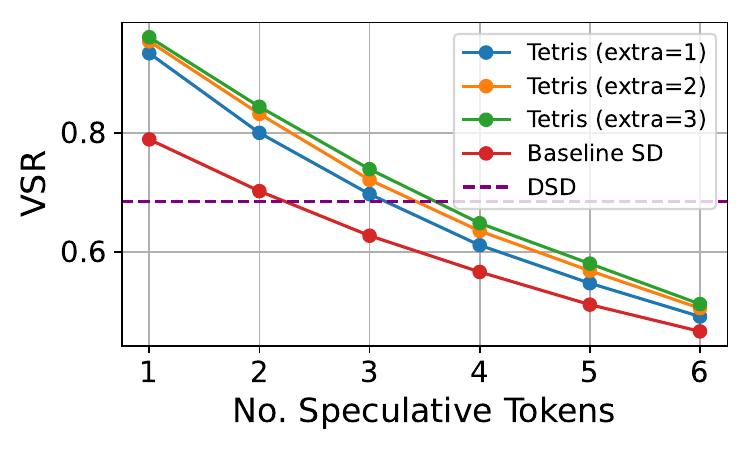} & 
    \includegraphics[width=0.31\linewidth]{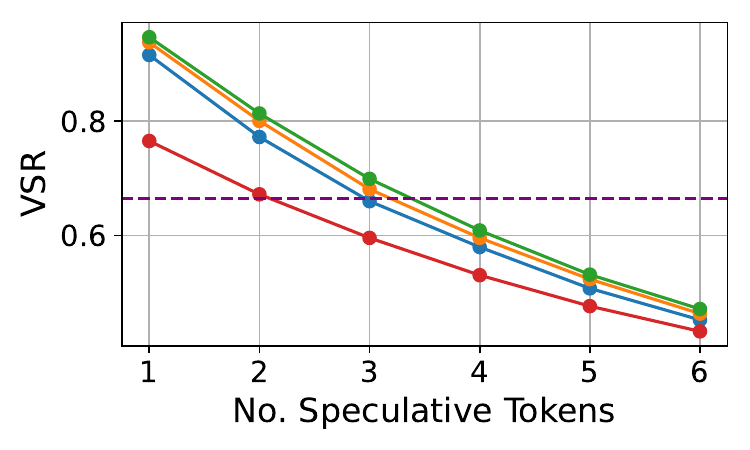} &
    \includegraphics[width=0.31\linewidth]{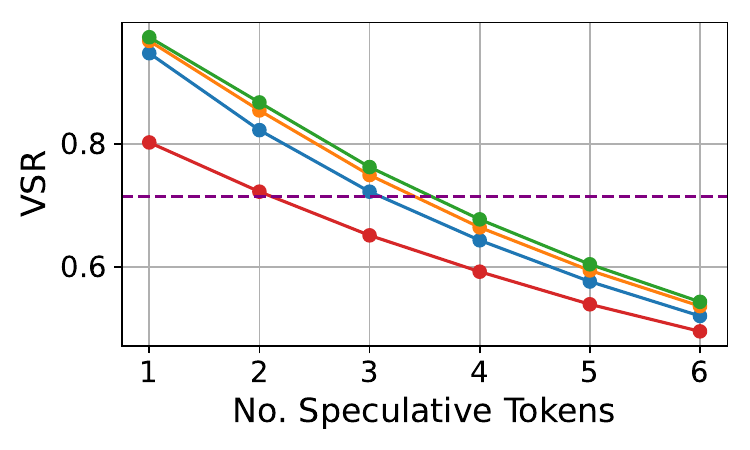} \\
    & (g) \hspace{5mm}  $\Delta$ 21.80\% & (h) \hspace{5mm} $\Delta$ 23.66\% & (i) \hspace{5mm} $\Delta$ 21.32\% \\
\end{tabular}
    \caption{The verification success rate comparison for various methods across experimental settings. $\Delta$ indicates the maximum gap between \alg{} and standard SD. The reported numbers reflect the mean over 3 independent trials.}
    \label{fig:VSR-all-appendix}
\end{figure}

\subsection{The Effect of Batch Size on \alg{} Performance}\label{app:batch-size}

Theoretically speaking, a larger batch size creates more possible combinations for draft token selection by \alg.
Therefore, \alg{} is likely to perform better in a speculative decoding server that processes a larger batch of requests concurrently.
In~\cref{fig:batch-size-TER-appendix}, we show a visual illustration of the verification success rate (VSR) and target efficiency rate (TER) (as defined in~\cref{eq:VSR} and~\cref{eq:TER}, respectively).

In setting 2 (draft model: Llama-1B-Instruct-FP8, target model: Llama-70B-Instruct), we observe a significant increase in VSR and TER when the batch size is increased to 64.
However, batch sizes of 16 and 32 have similar VSR and TER values.

In setting 3 (draft model: Llama-1B-Instruct-FP8, target model: Llama-405B-Instruct-FP8), we do not observe a significant change in VSR and TER, suggesting that the way that the batch size affects performance is highly dependent on the specific draft-target combination, too.

Overall, we expect a more significant improvement in the performance of adopting \alg{} by LLM inference service providers with larger capacities to handle a larger number of concurrent requests.

\begin{figure}[t]
    \centering
    \setlength{\tabcolsep}{1pt} 
    \begin{tabular}{ccc}
    & \hspace{8mm}\textbf{Setting 2} & \hspace{8mm}\textbf{Setting 3}  \\
    \rotatebox{90}{{\centering\hspace{1.8cm}\textbf{ShareGPT}}} & \includegraphics[width=0.47\linewidth]{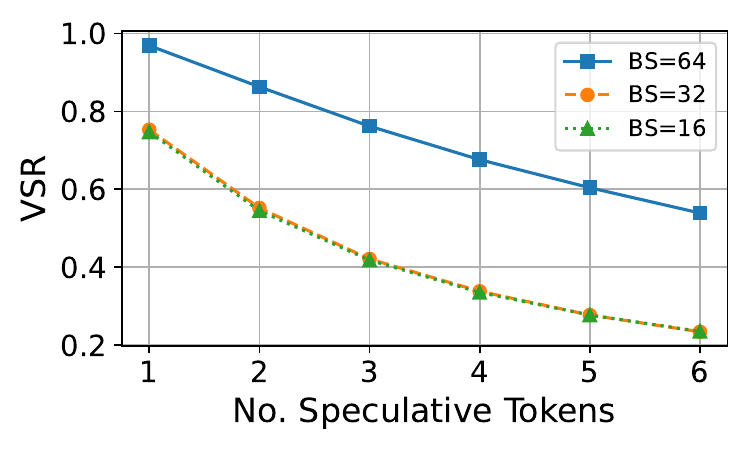} &
    \includegraphics[width=0.47\linewidth]{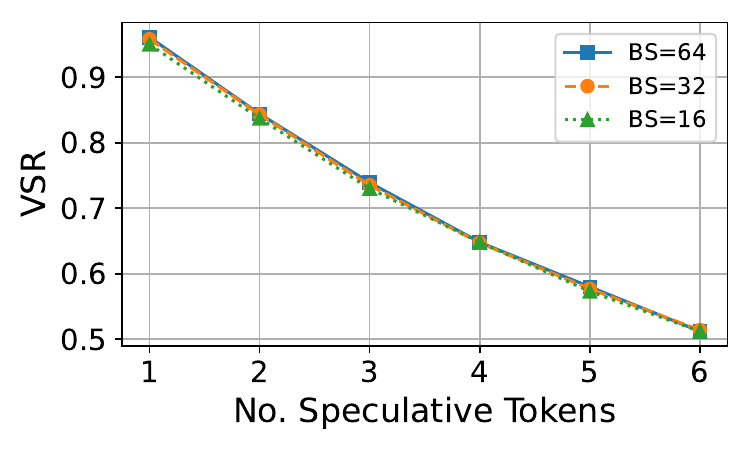} \\
    
    \rotatebox{90}{{\centering\hspace{1.8cm}\textbf{ShareGPT}}} & \includegraphics[width=0.47\linewidth]{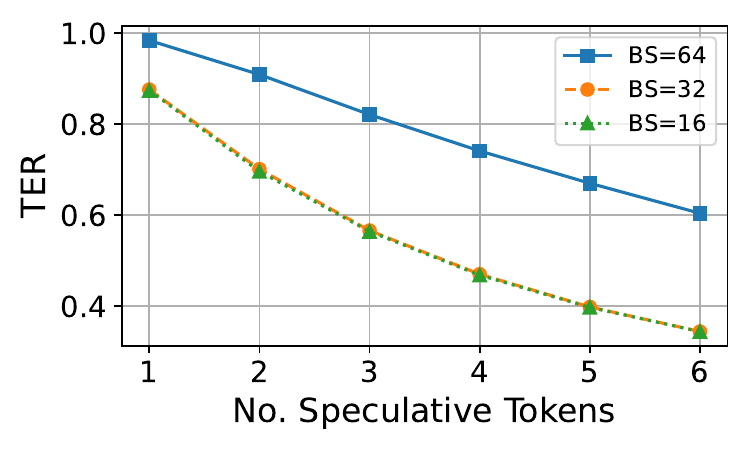} &
    \includegraphics[width=0.47\linewidth]{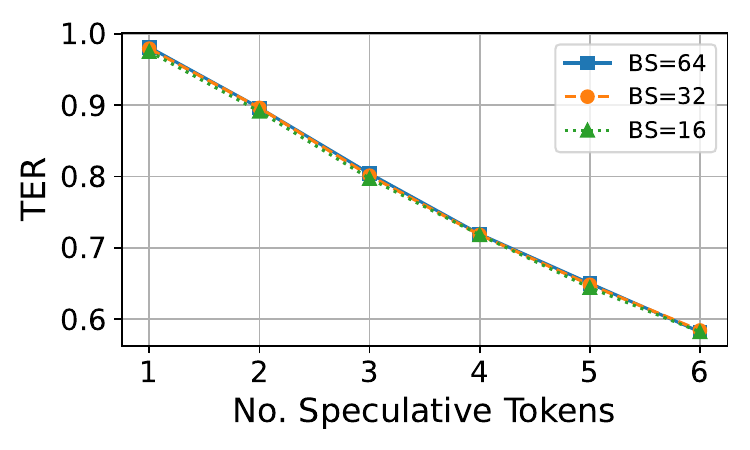} \\
    
\end{tabular}
    \caption{The change in the verification success rate (VSR) and target efficiency rate (TER) when we vary the batch size (BS) from 64 to 32 and 16. The reported numbers reflect the mean over 3 independent trials.}
    \label{fig:batch-size-TER-appendix}
\end{figure}

\section{Broader Impacts}\label{app:broader-impacts}

While this research work is primarily foundational, focusing on computational performance, the resulting increase in inference speed and efficiency of Large Language Models (LLMs) could indirectly contribute to certain societal risks associated with LLMs. Making LLM inference faster and cheaper lowers the barrier to deploying these models at scale. Consequently, this could potentially accelerate the proliferation of LLM-generated content, increasing the risks of misuse such as the large-scale generation of disinformation, spam, or fake online personas, if the underlying models are deployed without adequate safeguards. 

Mitigation strategies depend on responsible deployment. Developers using \alg{} should employ robust safety measures, bias detection, and content filtering for the LLMs being served. Importantly, the efficiency gains from \alg{} could be leveraged positively to make computational overhead for safety checks, alignment techniques, or bias mitigation measures more feasible during inference.
\\
    \hrule height 0.5mm

\end{document}